\newtheorem{theorem}{Theorem}
\newtheorem{definition}{Definition}
\newtheorem{lemma}{Lemma}
\newtheorem{corollary}{Corollary}
\newtheorem{assumption}{Assumption}
\newif\ifcomments
\newcommand{\yuandong}[1]{\todo[fancyline, color=red!50]{\textbf{Yuandong}: #1}\ignorespaces}
\def\vw{\mathbf{w}}
\def\vx{\mathbf{x}}
\def\vy{\mathbf{y}}
\def\vu{\mathbf{u}}
\def\vv{\mathbf{v}}
\def\vg{\mathbf{g}}
\def\vf{\mathbf{f}}
\def\vp{\mathbf{p}}
\def\cD{\mathcal{D}}
\def\cO{\mathcal{O}}
\def\vzero{\boldsymbol{0}}
\def\t{\intercal}
\def\rr{\mathbb{R}}
\begin{document}


\newcommand{\tsframework}{teacher-student\xspace}

\twocolumn[

\aistatstitle{Understanding Robustness in Teacher-Student Setting: A New Perspective}

\vspace{-0.3in}

\aistatsauthor{Zhuolin Yang$^*$ \And Zhaoxi Chen  \And  Tiffany (Tianhui) Cai}

\aistatsaddress{ UIUC \And Tsinghua University \And Columbia University}

\aistatsauthor{Xinyun Chen \And Bo Li \And Yuandong Tian$^*$}

\aistatsaddress{UC Berkeley \And UIUC \And  Facebook AI Research } 

\runningauthor{Zhuolin Yang$^*$, Zhaoxi Chen, Tiffany (Tianhui) Cai, Xinyun Chen, Bo Li, Yuandong Tian$^*$}

]

\vspace{-0.1in}
\begin{abstract}
\vspace{-0.1in}
Adversarial examples have appeared as a ubiquitous property of machine learning models where bounded adversarial perturbation could mislead the models to make arbitrarily incorrect predictions.
Such examples provide a way to assess the robustness of machine learning models as well as a proxy for understanding the model training process.
There have been extensive studies trying to explain the existence of adversarial examples and provide ways to improve model robustness, e.g., adversarial training.
Different from prior works that mostly focus on models trained on datasets with predefined labels, 
we leverage the teacher-student framework and assume a teacher model, or \emph{oracle}, to provide the labels for given instances. 
In this setting, we extend~\citet{tian2019student} in the case of low-rank input data, and show that \emph{student specialization} (the trained student neuron is highly correlated with certain teacher neuron at the same layer) still happens within the input subspace, but the teacher and student nodes could \emph{differ wildly} out of the data subspace, which we conjecture leads to adversarial examples. 
Extensive experiments show that student specialization correlates strongly with model robustness in different scenarios, including students trained via standard training, adversarial training, confidence-calibrated adversarial training, and training with the robust feature dataset. Our studies could shed light on the future exploration of adversarial examples, and potential approaches to enhance model robustness via principled data augmentation.

\end{abstract}
\section{Introduction}
The existence of adversarial examples is an intriguing and important phenomenon in deep learning. Understanding why such examples exist can lead to (1) more robust architectures and training algorithms usable in the real world, and (2) better understanding of network training and learned representations. 

Many previous works on adversarial examples~\citep{goodfellow2014explaining,szegedy2013intriguing} focus on the standard setting of supervised classification learning in which a network is trained on a fixed dataset $\cD = \{(\vx_i, \vy_i)\}$, where $\vx_i$ is a high-dimensional input feature and $\vy_i$ is its label (continuous or discrete). 
While general, the worst-case scenario (i.e., random label $\vy_i$) may lead to exponentially many adversarial examples since every corner of the input space needs to be covered, which might never happen in practice. 

In this paper, we take a novel perspective to study adversarial examples with the \emph{teacher-student} formulation. In this setting, we have a teacher network $f^*$ as an \emph{oracle} network to provide the true label $\vy_i$ given the input $\vx_i$, i.e. $\vy_i = f^*(\vx_i)$. By definition, there is \emph{no} adversarial examples for the teacher. For a student network $f$, while $\|\vf(\vx) - \vf^*(\vx)\|$ remains small when $\vx \in \cD$, the \emph{adversarial samples} $\vx'$ for $f$ have large $\|\vf(\vx') - \vf^*(\vx')\|$ while in the local neighborhood of $\vx$.

This teacher-student assumption imposes implicit realizable constraints for $(\vx_i, \vy_i)$ pairs. Using the teacher as the \emph{reference network}, we open the black-box mapping $\vx \mapsto \vy$, and more in-depth analysis can be performed. Moreover, such a setting has interesting properties~\citep{tian2019student}: with full-rank and sufficient input, student nodes in multi-layer ReLU networks are \emph{specialized} to teacher nodes at the same layer after training (both networks have the same depth). Also, there exist unspecialized student nodes in the final trained student model. We hypothesize that the existence of such nodes is the source of the non-robustness of a trained model, which opens a new way to study robustness and adversarial samples.

In this work, we extend~\citet{tian2019student} to handle the low-rank dataset and use \textbf{N}ormalized \textbf{C}orrelation (NC) between teacher and student nodes as an additional signal to study adversarial robustness of the student network. We analyze the cause of adversarial samples and show positive correlations between NC and robustness: 
\textbf{(1)} Theoretically, we show that student specialization happens in the low-dimensional input, and specify the conditions for unspecialized nodes.
\textbf{(2)} Empirically, we show that high NC is correlated to strong adversarial robustness, verified under different scenarios such as the comparison of student network with standard training, adversarial training, adversarial training with CCAT strategy~\citep{stutz2019confidence}, and model trained with robust feature dataset~\citep{adv_feature}. 

The teacher-student framework provides a quantitative way to understand the existence of adversarial examples in low-dimensional subspace, and a quantitative measurement (i.e., node specialization) that indicates model robustness. Our analysis also confirms several existing observations about adversarial examples~\citep{adv_feature,stutz2019confidence,khoury2018geometry} from the teacher-student framework perspective.

\section{Related Works}
\textbf{Adversarial examples.} 
Recent studies have shown that deep neural networks are vulnerable to adversarial examples, which are carefully crafted inputs aiming to mislead well-trained ML models~\citep{goodfellow2014explaining,szegedy2013intriguing}.
Since adversarial examples have raised many security concerns for ML models, different studies have been conducted to analyze its properties, such as the reasons for their existence~\citep{shamir2019simple,shi2019understanding,ilyas2019adversarial,gu2019saliency,tsipras2018robustness,kotyan2019representation}, adversarial transferability~\citep{tramer2017space,papernot2016transferability,bhagoji2017exploring}, and compactness of adversarial regions~\citep{singh2018understanding,chen2020explore,tabacof2016exploring}.
Approaches to generate such adversarial examples have also been proposed using different perturbation measurement metrics and generative models, including both $\mathcal{L}_p$ bounded and unrestricted attacks~\citep{wong2019wasserstein,bhattad2019unrestricted,xiao2018generating,xiao2018spatially, athalye2018obfuscated, vargas2019understanding}.
However, given these rich studies on adversarial examples, it remains an open question on why a small magnitude of perturbation is enough to fool a DNN model effectively and what roles the model architecture and intermediate representation play in these attacks given the complexity of a human-labeled ``natural'' dataset. We make the first attempt to investigate such questions from a different perspective, using the teacher-student framework to provide controllable constraints for the ground-truth dataset labels.

Several defense approaches have been proposed against adversarial attacks, and one of the most effective methods is \textit{adversarial training}~\citep{madry2017towards}. Different variations for adversarial training have been studied to improve its efficiency and scalability~\citep{shafahi2019adversarial,xie2020smooth}, as well as understand its limitations~\citep{zhang2019limitations,kang2019testing}.
As adversarial training has achieved promising empirical performance by improving ML robustness, we aim to leverage the \tsframework framework to provide theoretical observations on why adversarial training defends against adversarial attacks and how the intermediate representation changes after adversarial training.

\textbf{Teacher-student setting}. 
The \tsframework setting is an old topic~\citep{engel2001statistical,saad1996dynamics,mace1998statistical,freeman1997online,gardner1989three}. Recent work has analyzed the specialization of the student nodes towards that of the teacher for 2-layer networks~\citep{goldt2019dynamics,aubin2018committee}, and 
~\citet{allen2019learning} has shown the analysis for 2 and 3 layer networks with modified SGD, batch size 1, and heavy over-parameterization. Later~\citet{tian2019student} shows that the student neuron specialization happens around SGD critical points in the lowest layer for deep ReLU networks without parametric assumption, and provides polynomial sample complexity for 2 layer ReLU networks.
In this work, we use the teacher as an ``oracle'' to provide an in-depth
understanding of adversarial examples generated against the corresponding student model due to the fact that some student nodes fail to specialize fully to the teacher. 

\def\cU{\mathcal{U}}
\def\cV{\mathcal{V}}
\def\cX{\mathcal{X}}

\begin{figure*}
    \centering
    \includegraphics[width=0.95\textwidth]{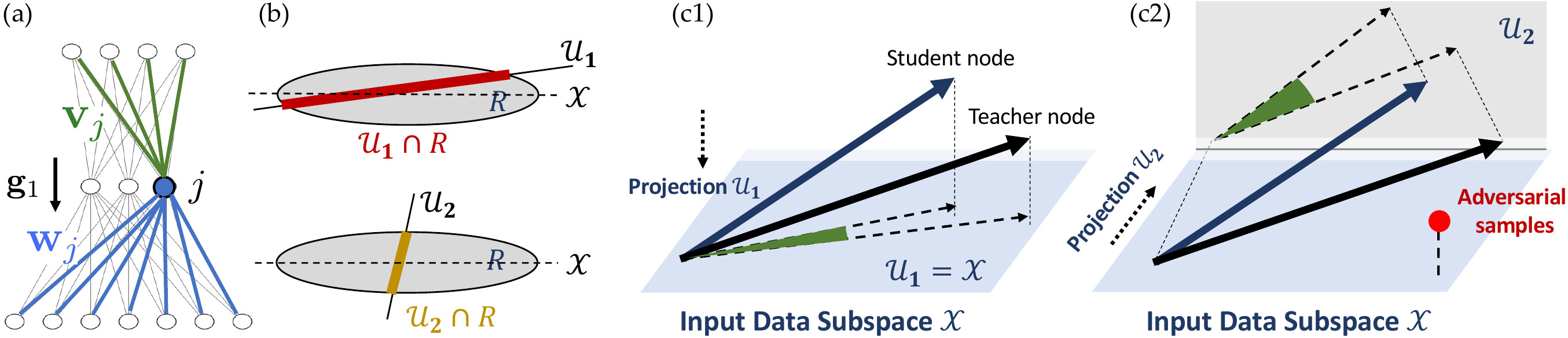}
    \vspace{-0.1in}
    \caption{\small Student Specialization in Low-rank dataset. \textbf{(a)} Setting of two-layered network (Sec.~\ref{sec:two-layer}) and notations. $\vg_1$ is the backpropagated gradient through the hidden layer. For a node/neuron $j$, its input weight is $\vw_j$ and fan-out weight $\vv_j$. \textbf{(b)} Radius of inscribed ball (1-dimension) of the intersection of a subspace $\cU$ and input data region $R$. The radius is large if $\cU$ aligns with the high-rank direction of data region. \textbf{(c1)} Within the data subspace $\cX$, the student and teacher node has small projected angle (i.e., angle between projected weights); \textbf{(c2)} If we project the weights to subspace $\cU_2 \perp \cX$, then the projected angle between student and teacher weight vectors remain large, due to limited data outside $\cX$. In this case, student response of data out of $\cX{}$ can be very different from the teacher, yielding adversarial samples.}
    \label{fig:specialization-low-rank}
    \vspace{-0.1in}
\end{figure*}
\vspace{-0.1in}

\section{Teacher-Student Setting in Low-Dimensional Input}
\vspace{-0.1in}
\subsection{Teacher network assumptions}
\vspace{-0.1in}
Let $f^*$ be the teacher and $f$ be the student. The label $\vy_i$ of each $\vx_i$ from a finite dataset $(\vx_i, \vy_i)$ is given by the teacher network $f^*$:
\begin{equation}
\setlength{\abovedisplayskip}{3pt}
\setlength{\belowdisplayskip}{3pt}
    \vy_i = f^*(\vx_i), \quad i = 1 \ldots N \label{eq:teacher-def}
\end{equation}
As an example of how teacher-student setting connects adversarial samples and robustness, in the theoretical analysis, we consider both $f^*$ and $f$ to be two-layer networks with ReLU activation and L2 loss function. 

Note that our setting is different from network distillation~\citep{hinton2015distilling}, where both teacher and student are \emph{trainable} networks given the data. In this paper, the teacher network represents \emph{an oracle} that gives the ground truth labels. Therefore, by definition, no adversarial examples exist for the teacher network.

\def\proj{\mathrm{Proj}}

\subsection{Two-layer student specialization in low-rank setting}
\label{sec:two-layer}
\vspace{-0.1in}
\textbf{Notation.} For each hidden node $j$ in the student network, let $\vw_j$ be its incoming weight and $\vv_j \in \rr^C$ its fan-out weights, where $C$ is the output dimension of both teacher and student (Figure~\ref{fig:specialization-low-rank}(a)).  
Note that for $d$-dimensional input, $\vw_j := [\tilde\vw_j; b]\in\rr^{d+1}$ includes both the weight and the bias. Correspondingly, the input $\vx = [\tilde\vx; 1] \in\rr^{d+1}$, where $\tilde\vx\in \rr^d$ is the actual sample.
For teacher node, we have $\vw^*_j$ and $\vv^*_j$ respectively. Let $\vg_1$ be the backpropagated gradient at the student hidden layer and $K$ be the total number of hidden nodes (neurons) for teacher and student. 

We consider the situation where the \textbf{training has already been done}, characterized by the the condition $\|\vg_1\|_\infty < \epsilon$. Note that for mathematical convenience, the condition is \emph{stronger} than usual convergence: $\|\vg_1\|_\infty < \epsilon$ means that the gradient is small at \emph{every} data point in the data region $R$ that has \emph{infinite} samples. This ideal setting facilitates our analysis. 

One interesting phenomenon given the condition $\|\vg_1\|_\infty < \epsilon$, or in the extreme case $\vg_1 = \vzero{}$, is \emph{student specialization}~\citep{tian2019student}; that is, when the input data distribution is \emph{full-rank}, for each teacher node $j$, there exists at least one student $k$ whose weight is co-linear with the teacher: for some $\lambda > 0$, $\vw^*_j = \lambda \vw_k$. (c.f., Theorem.1 in~\citet{tian2019student}). This means that the student completely recovers the teacher's internal information upon convergence through training.

A more interesting and realistic situation is when the input data $R$ lie in a low-dimensional space $\cX$. In this case, a perfect recovery is impossible, since there could exist multiple teachers satisfying Eqn.~\ref{eq:teacher-def}. For example, if $f^*$ is such a teacher, then for any weight $\vw^*_j$ in the lowest layer of $f^*$, there exists another teacher $f^{*'}$ with $\vw^{*'}_j = \vw^*_j + \delta\vw$, where $\delta\vw \perp \cX$, and $f^{*'}$ also satisfies Eqn.~\ref{eq:teacher-def}. Hence, we do not expect a full-specialization, but a \emph{partial} one in the input space $\cX$. Note that we use the concept of \emph{observation} between two nodes $j$ and $k$, which is a technical condition in~\citep{tian2019student}~\footnote{A node $j$ is \emph{observed} by a node $k$, if the boundary of $j$ is in the active region of $k$: $\partial E_j \cap E_k \neq \emptyset$. Here $E_j := \{\vx: \vw_j^\t\vx \ge 0\}$ is the activation region and $\partial E_j := \{\vx: \vw_j^\t\vx = 0\}$ is its boundary.}. 

\begin{theorem}[Partial Specialization for Infinite Low-Dimensional Input]
\label{thm:proj-specialization}
If the input dataset $R \subseteq \cX$, then when the gradient $\vg_1 = \vzero{}$, for each teacher node $j$ observed by any student node, there exists a student node $k$ so that $\proj_{\cX}[\vw_k] = \lambda\proj_{\cX}[\vw^*_j]$ for some $\lambda > 0$.  
\end{theorem}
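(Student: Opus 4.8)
The plan is to reduce the claim to the full-rank specialization result quoted above (Theorem~1 of \citet{tian2019student}), run on the network as ``seen'' through the subspace $\cX$; the low-rank phenomenon then appears simply because all of this analysis is blind to directions orthogonal to $\cX$.

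\textbf{Step 1: pass to the subspace.} For any $\vx\in\cX$ and any hidden weight $\vw$ (student or teacher), $\vw^\t\vx=\proj_\cX[\vw]^\t\vx$, since the component of $\vw$ orthogonal to $\cX$ annihilates $\vx$. Writing $\tilde\vw_k:=\proj_\cX[\vw_k]$ and $\tilde\vw^*_j:=\proj_\cX[\vw^*_j]$, the restrictions of $f,f^*$ to $\cX$ coincide with the two-layer ReLU networks built from $\{(\tilde\vw_k,\vv_k)\}$ and $\{(\tilde\vw^*_j,\vv^*_j)\}$, and for $\vx\in\cX$ the residual is $\ve(\vx):=f(\vx)-f^*(\vx)=\sum_k\sigma(\tilde\vw_k^\t\vx)\vv_k-\sum_j\sigma(\tilde\vw_j^{*\t}\vx)\vv^*_j$. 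A teacher node with $\tilde\vw^*_j=\vzero$ is constant on $\cX$, so its boundary meets $\cX$ in all of $\cX$ or not at all; it is never observed and the claim is vacuous for it, so assume henceforth $\tilde\vw^*_j\neq\vzero$. Let $E_k:=\{\vx\in\cX:\tilde\vw_k^\t\vx\ge0\}$ and $E^*_j:=\{\vx\in\cX:\tilde\vw_j^{*\t}\vx\ge0\}$ be the activation regions within $\cX$.

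\textbf{Step 2: restate stationarity and invoke the full-rank theorem.} For a two-layer network with L2 loss the backpropagated signal into hidden node $k$ at input $\vx$ is $\sigma'(\vw_k^\t\vx)\,(\vv_k^\t\ve(\vx))$, so $\vg_1=\vzero$ on $R$ says $\vv_k^\t\ve(\vx)=0$ for every $k$ and every $\vx\in R$ with $\tilde\vw_k^\t\vx>0$. Since $R$ is full-dimensional inside $\cX$ (``infinite samples''), it covers the relative interior of $E_k$, so by continuity $\vv_k^\t\ve\equiv0$ on all of $E_k$. This is exactly the stationarity condition of \citet{tian2019student}, now posed for the full-rank problem inside $\cX$ with parameters $\{\tilde\vw_k\},\{\tilde\vw^*_j\}$. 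Their Theorem~1 then yields, for each teacher node $j$ ($\tilde\vw^*_j\neq\vzero$) observed by some student, a student $k$ with $\tilde\vw_k=\lambda\tilde\vw^*_j$, $\lambda>0$; since $\tilde\vw_k=\proj_\cX[\vw_k]$ and $\tilde\vw^*_j=\proj_\cX[\vw^*_j]$, this is the assertion. (If several teacher nodes share a projected direction, merge them first into one effective node with summed fan-out; a student co-linear with the merged direction still works.)

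\textbf{Step 3: the crux.} The substance is the matching argument \citet{tian2019student} use for the full-rank statement, which I would re-run inside $\cX$. Take $\vx_0$ in the observed part of $\partial E^*_j$, generic so that a small ball $N$ around it meets no other hyperplane of the arrangement $\{\tilde\vw_\ell^\t\vx=0\}\cup\{\tilde\vw_{j'}^{*\t}\vx=0\}$; then on $N$ the student $f$ is affine while $\nabla f^*$ has the rank-one jump $\vv^*_j\tilde\vw_j^{*\t}$ across $\partial E^*_j$. Every student active on $N$ observes $j$ (its activation region contains $\vx_0\in\partial E^*_j$ in its interior), and its condition $\vv_k^\t\ve\equiv0$ on both sides of $\partial E^*_j$ forces $\vv_k^\t\vv^*_j=0$ \emph{unless} some student boundary coincides with $\partial E^*_j$. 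So the heart of the proof is ruling out the ``silent'' case in which $\vv^*_j$ is orthogonal to the span of the fan-outs of all students observing $j$ (forcing instead an actual coincident student boundary), and then checking that this boundary has the right orientation, $E_k=E^*_j$, i.e.\ $\lambda>0$, so that the student's slope jump cancels the teacher's on the teacher's active side. I expect this orthogonality exclusion to be the main obstacle, and the place where whatever non-degeneracy hypotheses on the teacher are inherited from \citet{tian2019student} must be used; Steps~1 and~2 are routine by comparison.
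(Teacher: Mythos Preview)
Your approach is essentially the same as the paper's: both reduce the low-rank problem to the full-rank specialization result of \citet{tian2019student} by restricting the network to the subspace $\cX$, and then pull the co-linearity conclusion back via the projection. The only cosmetic difference is that the paper carries out the reduction through explicit coordinates (an orthonormal basis $U$ for $\cX$ and a lemma rewriting $\vw^\t\vx$ as $\vw_y^\t\vy$ in the reduced space, which tracks the bias term cleanly) and then invokes Lemma~3 of \citet{tian2019student} as a black box, whereas you work with projections directly and additionally sketch the internals of that lemma in your Step~3.
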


See Appendix A.2 for the proof. Theorem~\ref{thm:proj-specialization} means that the weight $\vw_k$ of a specialized student node can be decomposed into two components: $\vw_k = \lambda\proj_{\cX}[\vw^*_j] + \vw^e_k$, where the first term is the useful (specialized) component of $\vw_k$. The second term $\vw^e_k$ is the component that is orthogonal to the subspace $\cX$. Note that $\vw^e_k$ is affected by initialization and $\|\vw^e_k\|$ can be arbitrarily large while not affecting the output of $f$, given its input is within $\cX$. 

For the realistic case when the gradient is small but non-zero and the input data $R$ is ``almost'' low-rank, what would happen? To characterize the low-rank structure, we consider the \emph{radius of the largest inscribed ball} in $\cU\cap R$, $r(\cU\cap R)$, with an arbitrary subspace $\cU$. If $\cU$ is aligned with the high-rank structure of $R$, then $r(\cU\cap R)$ is large, otherwise small (Figure~\ref{fig:specialization-low-rank}(b)). Here $\alpha_{jk} \coloneqq \vv^{*\t}_j\vv_k$ is the inner product between the teacher and the student fan-out weights:

\begin{theorem}[Specialization of Projected Weights in Low-Dimensional Input]
\label{thm:proj-specialization-noise}
When $\|\vg_1\|_\infty\le \epsilon$, for each teacher node $j$ observed by a student $k$, there exists a student node $k'$ so that for projected weight $\tilde\vp_{k'} := \proj_{\cU}[\tilde\vw_{k'}]$ and $\tilde\vp_j^* := \proj_{\cU}[\tilde\vw^*_j]$, their angle $\theta^\cU_{jk'} := \arccos(\tilde\vp_{k'}^\t\tilde\vp_j^*)$ satisfies $\sin(\theta^\cU_{jk'}) \le M_j(\cU)K\epsilon / \alpha_{jk}$, where $M_j(\cU) := \cO(r^{-1}(\cU\cap R\cap \partial E_j))$. 
\end{theorem}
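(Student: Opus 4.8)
The plan is to follow the proof of the exact statement (Theorem~\ref{thm:proj-specialization}) but to propagate the residual gradient bound $\epsilon$ and the geometry of the data region $R$ through every estimate. Write $\ve(\vx) := \vf(\vx) - \vf^*(\vx)\in\rr^C$ for the output residual. For a two-layer ReLU network under the L2 loss, the node-$k$ component of the hidden-layer gradient at a sample $\vx$ is $\vg_1^{(k)}(\vx) = \ii{\vw_k^\t\vx\ge 0}\,(\vv_k^\t\ve(\vx))\,\vx$. Since $\vx=[\tilde\vx;1]$ carries a unit bias coordinate, $\|\vg_1\|_\infty\le\epsilon$ already forces the pointwise bound $|\vv_k^\t\ve(\vx)|\le\epsilon$ for every $\vx$ in the active region $E_k$, in particular on $E_k\cap R$. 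Thus the whole problem reduces to a uniform $\epsilon$-bound on the single scalar piecewise-linear function $\phi_k := \vv_k^\t\ve$. We may assume $\alpha_{jk}\neq 0$ and $\proj_\cU[\tilde\vw^*_j]\neq 0$, since otherwise the bound is vacuous or teacher $j$ has no genuine kink inside $\cU$ and $\theta^\cU_{jk'}$ is undefined.

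Next I would restrict $\phi_k$ to $\{[\tilde\vx;1]:\tilde\vx\in\cU\}$; it remains piecewise linear in $\tilde\vx$, with exactly one kink hyperplane $\{\tilde\vx\in\cU:\proj_\cU[\tilde\vw_u]^\t\tilde\vx+b_u=0\}$ per unit $u$, and across unit $u$'s kink the $\cU$-gradient of $\phi_k$ jumps by $\pm(\vv_k^\t\vc_u)\,\proj_\cU[\tilde\vw_u]$, where $\vc_u=\vv_u$ if $u$ is a student unit and $\vc_u=-\vv^*_u$ if $u$ is a teacher unit. In particular teacher $j$ contributes a jump of magnitude $\alpha_{jk}\,\|\proj_\cU[\tilde\vw^*_j]\|$ in the direction $\proj_\cU[\tilde\vw^*_j]$, and this is the quantity that must be cancelled by the kinks of nearby student units. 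Because $j$ is observed by $k$, the set $\partial E_j\cap E_k\cap R$ is nonempty, so invoking the definition of the supporting radius one selects a ball $B\subseteq\cU\cap R\cap E_k$ that is bisected by $\partial E_j$ and has radius $r=\Theta(r(\cU\cap R\cap\partial E_j))$; on $B$ we have $|\phi_k|\le\epsilon$. This is the origin of $M_j(\cU)=\cO(r^{-1}(\cU\cap R\cap\partial E_j))$, into which we also fold the remaining $j$- and $\cU$-dependent geometric constants (the factor $1/\|\proj_\cU[\tilde\vw^*_j]\|$ and any weight-normalization factors entering the definition of $\theta^\cU_{jk'}$).

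The heart of the argument is a low-complexity piecewise-linear rigidity fact: a scalar function on a ball of radius $r$ that is a sum of at most $\cO(K)$ affine-plus-ReLU terms and is bounded by $\epsilon$ in absolute value cannot carry a single isolated kink of strength exceeding $\cO(\epsilon K/r)$, the factor $K$ being saturated by $\Theta(K)$ parallel, evenly spaced kinks of alternating sign. Applying this to $\phi_k$ on $B$: let $k'$ be the student whose $\cU$-projected kink makes the smallest angle $\theta^\cU_{jk'}$ with $\partial E_j$ among all students; the component of teacher $j$'s jump that is transverse to $k'$'s jump direction cannot be cancelled by any fan-out combination, so the residual kink near $\partial E_j$ has magnitude $\Omega(\alpha_{jk}\,\|\proj_\cU[\tilde\vw^*_j]\|\,\sin\theta^\cU_{jk'})$, and the rigidity fact gives $\alpha_{jk}\,\|\proj_\cU[\tilde\vw^*_j]\|\,\sin\theta^\cU_{jk'}=\cO(\epsilon K/r)$, i.e. $\sin\theta^\cU_{jk'}\le M_j(\cU)K\epsilon/\alpha_{jk}$. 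As a consistency check, sending $\epsilon\to 0$ with $\cU\subseteq\cX$ collapses this to the exact co-linearity $\proj_\cX[\vw_{k'}]=\lambda\proj_\cX[\vw^*_j]$ of Theorem~\ref{thm:proj-specialization}.

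The main obstacle is the rigidity step, specifically the passage from ``no student kink is nearly parallel to $\partial E_j$'' to ``teacher $j$'s kink is effectively isolated on a region of size $\Omega(r/K)$'': two hyperplanes at a small angle still intersect, so one cannot simply excise a slab around $\partial E_j$. The honest route is to choose a center on $\partial E_j\cap B$ that is far, within the $(\dim\cU-1)$-dimensional slice, from all $\cO(K)$ loci where student kinks meet $\partial E_j$, pass to a ball of radius $\Omega(r/K)$ about it, and verify that any student kink entering that ball must already make angle $\cO(\epsilon K/(r\,\alpha_{jk}\,\|\proj_\cU[\tilde\vw^*_j]\|))$ with $\partial E_j$, which is exactly the desired conclusion, so the argument closes. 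The remaining points — that the bias coordinate and the factor $\|\vx\|$ only help, and that all geometric constants fit inside $M_j(\cU)$ — are routine bookkeeping.
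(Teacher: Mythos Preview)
Your proposal is correct and follows essentially the same route as the paper: reduce to the subspace $\cU$ via an orthonormal basis (so that inner products and ReLU activations are computed with the projected weights $\tilde\vw_y=U^\t\tilde\vw$), then analyze the scalar piecewise-linear function $\vv_k^\t\ve(\vx)$, which is uniformly $\epsilon$-bounded on $E_k\cap R$, and argue that teacher $j$'s kink must be nearly matched by some node. The ``rigidity fact'' you sketch is exactly the content of the paper's Lemmas~3 and~4: a volume/pigeonhole argument on the $(d'-1)$-ball $B\subseteq\partial E_j$ of radius $r=r(\cU\cap R\cap\partial E_j)$ produces a point $\vx_j\in\partial E_j$ at distance $>5\epsilon/|\alpha_{jk}|$ from every other hyperplane whenever all angles exceed $MK\epsilon/|\alpha_{jk}|$, and then the second difference $h(\vx_j^+)+h(\vx_j^-)-2h(\vx_j)$ with $\vx_j^\pm=\vx_j\pm q_j\tilde\vw_j$ isolates node $j$'s contribution and yields the contradiction $5\epsilon\le 4\epsilon$.
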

Please check Appendix~\ref{sec-proof2} for the proof. From Theorem~\ref{thm:proj-specialization-noise}, we can see that large radius $r(\cU\cap R)$ and large $\|\vv^*_j\|$ (and thus large $\alpha_{jk}$) yield tighter bound of specialization error. 
When the subspace $\cU$ aligns with the main direction of $R$ (or $\cX$), the inscribed radius $r$ is large, the projected angles $\theta^\cU_{jk'}$ between weight vectors are small and the alignment is good (Figure~\ref{fig:specialization-low-rank}(c1)). 
On the other hand, if $\cU \perp \cX$, the radius becomes tiny (Figure~\ref{fig:specialization-low-rank}(b)) and the projected angle has a much looser bound (Figure~\ref{fig:specialization-low-rank}(c2)). Empirically, the projected angle often remains large even after many epochs of training. 

In addition, \citet{tian2019student} pointed out that there are \emph{unspecialized} nodes, i.e., neurons that are not aligned with any teacher or student node, and their fan-out weights are zero and thus prunable. It happens in the low-dimensional input and small gradient case as well: 
\begin{corollary}[Unspecialized nodes in Low-Dimensional Input]
\label{col:proj-unspecialized}
If $\|\vg_1\|_\infty\le \epsilon$, a student node $k'$ is observed by 
other student nodes with fan-out weights $Q = [\vv_{k_1},\vv_{k_2},\ldots,\vv_{k_C}]$, and has projected angle $\sin(\theta^\cU_{jk'}) \ge c_0$ with other teacher/student node $j$, then its fan-out weight is small: $\|\vv_{k'}\|_2 \le \|Q^{-1}\|_1 M_{k'}(\cU) K\epsilon / c_0$. 
\end{corollary}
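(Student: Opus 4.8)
\emph{Plan and setup.} The statement ties together $C$ local, per-observer consequences of convergence through the matrix $Q$, so the plan is: (i) from $\|\vg_1\|_\infty\le\epsilon$ extract, for each observing node $k_i$, a bound on the single scalar $\vv_{k_i}^\t\vv_{k'}$; (ii) stack these $C$ numbers into $Q^\t\vv_{k'}$ and invert $Q$. For step (i) I would use that, for two-layer ReLU networks with $L^2$ loss, the backpropagated hidden gradient at a student node $a$ is $g_{1,a}(\vx)=\ii{\vw_a^\t\vx\ge 0}\,\vv_a^\t\vh(\vx)$ with $\vh:=\vf-\vf^*$, so $\|\vg_1\|_\infty\le\epsilon$ forces $|\vv_a^\t\vh(\vx)|\le\epsilon$ on $R\cap E_a$. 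On $R$ the map $\vv_a^\t\vh$ is continuous and piecewise linear, and its $\tilde\vx$-gradient jumps by $\pm(\vv_a^\t\vv_b)\tilde\vw_b$ across each student boundary $\partial E_b$ and by $\mp(\vv_a^\t\vv_l^*)\tilde\vw_l^*$ across each teacher boundary $\partial E_l^*$ — $\cO(K)$ hyperplanes in all. Projecting onto $\cU$, the jump across $\partial E_b$ seen within $\cU$ has normal direction $\proj_\cU[\tilde\vw_b]=\tilde\vp_b$ and magnitude $|\vv_a^\t\vv_b|\,\|\tilde\vp_b\|$.

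\emph{Core estimate.} Fix an observer $k_i$. Since $k'$ is observed by $k_i$, $\partial E_{k'}\cap E_{k_i}\ne\emptyset$, and by the radius bookkeeping already carried out in Theorem~\ref{thm:proj-specialization-noise} the function $\vv_{k_i}^\t\vh$ — bounded by $\epsilon$ on $R\cap E_{k_i}$, piecewise linear, with kinks along the $\cO(K)$ node boundaries — may be probed on a $\cU$-region of inscribed radius $\Theta(1/M_{k'}(\cU))$ straddling $\partial E_{k'}$. I would then invoke the separated-kink estimate that underlies that theorem: among the $\cO(K)$ kink hyperplanes, the one of $k'$ lies at projected-normal angle at least $\arcsin c_0$ from every other one (this is exactly the hypothesis $\sin\theta^\cU_{bk'}\ge c_0$ for all teacher/student nodes $b\ne k'$), and a bounded piecewise-linear function cannot conceal a kink whose normal is so separated from all the others; hence the kink magnitude contributed by $k'$ is itself $\cO(M_{k'}(\cU)K\epsilon/c_0)$, i.e. $|\vv_{k_i}^\t\vv_{k'}|\le M_{k'}(\cU)K\epsilon/c_0$ (the $\|\tilde\vp_{k'}\|$ normalization handled as in that proof). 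This is precisely the student--student instance of the inequality $\alpha_{jk}\sin\theta^\cU_{jk'}=\cO(M_j(\cU)K\epsilon)$ behind Theorem~\ref{thm:proj-specialization-noise}, with the lower bound $c_0$ on $\sin\theta^\cU_{k_ik'}$ playing the role of the $\alpha^{-1}$ factor.

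\emph{Assembly and main obstacle.} Writing $\vr:=Q^\t\vv_{k'}=(\vv_{k_1}^\t\vv_{k'},\dots,\vv_{k_C}^\t\vv_{k'})^\t$, the $C$ scalar bounds give $\|\vr\|_\infty\le M_{k'}(\cU)K\epsilon/c_0$. Since $Q$ is invertible (implicit in the statement), $\vv_{k'}=(Q^{-1})^\t\vr$, and $\|\vv_{k'}\|_2\le\|\vv_{k'}\|_1=\|(Q^{-1})^\t\vr\|_1\le\|Q^{-1}\|_1\,\|\vr\|_\infty$, which yields $\|\vv_{k'}\|_2\le\|Q^{-1}\|_1\,M_{k'}(\cU)K\epsilon/c_0$. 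Everything here is routine except the core estimate, where the real content is the separated-kink lemma inside Theorem~\ref{thm:proj-specialization-noise} — quantifying how small a kink a bounded piecewise-linear function can hide when its normal is $c_0$-separated from the $\cO(K)$ others — which I would cite rather than re-derive. The one point needing care beyond a literal citation is uniformity of the inscribed radius in the observer $k_i$: the probe must lie inside $R\cap E_{k_i}$ near $\partial E_{k'}$ yet have radius comparable to $r(\cU\cap R\cap\partial E_{k'})$, so that a single $M_{k'}(\cU)$ serves all $i$. The genericity needed to localize the argument (base points off the $\cO(K^2)$ pairwise hyperplane intersections) is only a measure-zero nuisance.
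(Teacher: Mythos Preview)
Your approach is correct and is essentially the paper's: for each observer $k_i$ bound the scalar $|\vv_{k_i}^\t\vv_{k'}|$, stack into $Q^\t\vv_{k'}$, and invert. The paper gets the per-observer bound more directly by citing Theorem~\ref{thm:proj-specialization-noise} and reading it contrapositively --- applied with target node $k'$ and observer $k_i$, that theorem produces \emph{some} node $j$ with $\sin\theta^\cU_{jk'}\le M_{k'}(\cU)K\epsilon/|\alpha_{k_ik'}|$; the hypothesis forces $\sin\theta^\cU_{jk'}\ge c_0$; hence $|\alpha_{k_ik'}|=|\vv_{k_i}^\t\vv_{k'}|\le M_{k'}(\cU)K\epsilon/c_0$ --- so your separated-kink unpacking is unnecessary, and your uniformity worry dissolves because $M_{k'}(\cU)=\cO(r^{-1}(\cU\cap R\cap\partial E_{k'}))$ depends only on the target $k'$, not on the observer $k_i$. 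One small slip: the inequality $\|(Q^{-1})^\t\vr\|_1\le\|Q^{-1}\|_1\|\vr\|_\infty$ is false in general (try $Q=I$, $\vr=\vone$); the paper instead bounds $\|\vv_{k'}\|_\infty\le\|Q^{-\t}\|_\infty\|Q^\t\vv_{k'}\|_\infty$ and uses $\|Q^{-\t}\|_\infty=\|Q^{-1}\|_1$.
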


\yuandong{Need more pass here.}


\def\tin{\mathrm{in}}
\def\tout{\mathrm{out}}

\section{Adversarial Training in the Teacher-Student Setting}
\vspace{-0.5em}
As the main contribution, we now use our teacher-student framework to analyze various adversarial phenomena. To see why adversarial training is related to the teacher-student setting, one example is the experiments in~\citet{adv_feature} that show an intriguing property of adversarial examples: using the adversarial examples $\vx'$ and their ``wrong'' labels $f^*(\vx')$ (i.e., non-robust dataset in their Sec 3.2),  we can train a student model $f$ that performs well in the original test set.  

While this sounds like ``garbage-in signal-out'', our teacher-student setting explains it naturally. The label $f^*(\vx')$ is from the output of the teacher $f^*$ on an adversarial sample $\vx'$. While this label is regarded as ``wrong’' from the dataset point of view (since $\|f^*(\vx') - f^*(\vx)\|$ is large, where $\vx$ is the data point before adversarial perturbation), from our teacher-student perspective, the input-output pair $(\vx', f^*(\vx'))$ preserves the correct mapping of the teacher, \emph{regardless} of the nature of the input data. No wonder the trained student does well on the original test set, if the teacher does well.

With the teacher-student framework, we revise the concept of adversarial examples and analyze its properties. 


\subsection{An empirical model for learned students}
\vspace{-0.1in}
\label{sec:empirical-model}
Theorems in Sec.~\ref{sec:two-layer} tell that a learned student model on low-rank data has two properties:

\textbf{(1)} The student weight $\vw_k$ has large discrepancy from teacher weights along directions $d\perp \cX$ (Theorem~\ref{thm:proj-specialization-noise});

\textbf{(2)} If the student weight $\vw_k$ deviates from all teachers and student nodes within the data region $R$, then the magnitude of its fan-out weight is small (Corollary~\ref{col:proj-unspecialized}).\footnote{\small We leave one case for future work: two student nodes are both away from all other teacher/student nodes, and they both have strong fan-out weights.}

Note that for convenience, we omit technical conditions (e.g., the boundary needs to be observed). In the over-realization scenario, we assume that any boundary is always observed by many student nodes. 

Based on these two properties, we could come up with an \emph{empirical} model to relate a \emph{learned} student network with the teacher (here $\vw_k$ and $\vw^*_j$ are normalized):
\begin{equation}
\setlength{\abovedisplayskip}{3pt}
\setlength{\belowdisplayskip}{3pt}
  \vw_k = \vw^*_j + \epsilon_\tin \vu^\tin_k + \epsilon_{\tout} \vu^{\tout}_k \label{eq:empirical-model}
\end{equation}
where $\vu^{\tin}_k \in \cX$ and $\vu^{\tout}_k \perp \cX$ are unit vectors. $\epsilon_{\tin} = \epsilon_{\tout} = 0$ means perfect student specialization. 

Here the magnitudes of $\epsilon_{\tin}$ and $\epsilon_{\tout}$ are related to different factors. $\epsilon_{\tout}$ is related to the degree of low-rankness of the data. The more the data are rank-deficient, the smaller the supporting radius $r(\cU, R)$ for out-of-plane subspace $\cU$, and the bound becomes looser according to Theorem~\ref{thm:proj-specialization-noise}. This leads to larger $\epsilon_{\tout}$ that perturbs student node away from the teacher along the direction of out-of-distribution. 

On the other hand, $\epsilon_\tin$ depends on the magnitude of the fan-out weights. When the student node $k$ is \emph{unspecialized}, i.e., it strays away from teacher and other students' directions (large $\epsilon_\tin$), Corollary~\ref{col:proj-unspecialized} tells that its fan-out weight is small and therefore its influence to the output of the network is limited and/or negligible. 

The two unit-vectors $\vu^{\tin}_k$ and $\vu^{\tout}_k$ could be dependent on the network initialization and the training process. 

\textbf{Checking specialization of nodes}. 
There are two different ways for checking student specialization. 

\emph{Weight-check}. One method is to directly check whether $\vw^*_j = \lambda \vw_k$ for some $\lambda > 0$. While straightforward, an issue is that for intermediate layers of deep models, the input dimension of a node can be different between the teacher and an over-parameterized student. 

\emph{Activation-check}. Alternatively, we could use activation $\vf_j \in\rr^N$ computed on a given dataset of size $N$, as in~\citet{tian2019student}. By checking the Normalized Correlation between $\vf^*_j$ from the teacher and $\vf_k$ from the student, we could measure the degree of specialization.     

One short-coming for activation-check is that a perfect alignment with a low-dimensional input only tells that $\proj_{\cX}[\vw^*_j] = \lambda\proj_{\cX}[\vw_k]$, which means that $\epsilon_\tin = 0$. On the other hand, to check $\epsilon_\tout$, we would need to use data that are out of the subspace of $\cX$ (e.g., adversarial samples, adding noise to the input, etc).
\vspace{-0.1in}
\subsection{Adversarial examples in the teacher-student setting}
\vspace{-0.1in}
\label{sec:vulnerability}
Eqn.~\ref{eq:empirical-model} serves as an empirical model of the possible vulnerability of a learned student model compared to its teacher, due to $\epsilon_\tin$ and $\epsilon_\tout$. 
\textbf{First}, for a sample $\vx'$ out of the plane $\cX$, a high $\epsilon_\tout{}$ leads to large activation difference between the teacher and the student. This aligns with the existing hypothesis and understanding~\citep{khoury2018geometry,ma2018characterizing} that directions off the data manifold can be used to construct adversarial examples.
\textbf{Second}, we might also have in-plane adversarial samples that attack through $\vu^\tin$. 

We use adversarial samples as a probe to verify our empirical model and the induced vulnerability. Since we now have a teacher network that provides the ground truth label (in addition to the data label), there are two different ways to obtain an adversarial sample.

\textbf{Oracle-adversarial}. We define \emph{oracle-adversarial} examples as follows:
\begin{equation}
\setlength{\abovedisplayskip}{5pt}
\setlength{\belowdisplayskip}{5pt}
    \vx' = \arg\max_{\vx'\in B(\vx, \epsilon)} L[f(\vx'), f^*(\vx')], \label{eq:backprop-adv-oracle}
\end{equation}

where $L[\cdot]$ is a loss function (e.g., $L_2$, cross-entropy, etc). $\vx'$ can be obtained by back-propagating through both $f$ and $f^*$. We call $\vx'$ the \emph{oracle-adversarial example} and $f^*(\vx')$ the \emph{oracle label}.

\textbf{Data-adversarial}. The conventional formulation of (untargeted) adversarial examples is 
\begin{equation}
\setlength{\abovedisplayskip}{5pt}
\setlength{\belowdisplayskip}{5pt}
    \vx' = \arg\max_{\vx'\in B(\vx, \epsilon)} L[f(\vx'), \vy], \label{eq:backprop-adv-data}
\end{equation}

where $\vy$ is the ground truth label from the dataset, and $L$ is commonly cross-entropy for classification. 
Here, we only obtain samples against the teacher network $f^*$ for the training set $\cD$, and assume that $f^*$ is a constant function in $B(\vx_i, \epsilon)$, where $\vx_i$ is a sample in the training set. Since $f^*$ is constant in $B(\vx_i, \epsilon)$, we use the label $\vy_i = f^*(\vx_i)$ of the \emph{original} data point $\vx_i$ when optimizing Eqn.~\ref{eq:backprop-adv-data}, and only backpropagte through the student model $f$. In this paper, we call such an adversarial examples $\vx'$ \emph{data-adversarial}. 



In the presence of the teacher network, there are two ways to do adversarial training. Let $\vx'$ be the perturbed sample. For~\emph{label-target}, we simply use the label $\vy$ of the original sample $\vx$ to update: $\theta_{t+1} \leftarrow \theta_t - \alpha\nabla_\theta L[f_{\theta_t}(\vx'), \vy]$. Alternatively, we could also use teacher output $f^*(\vx')$ as the label of $\vx'$ and update: $\theta_{t+1} \leftarrow \theta_t - \alpha\nabla_\theta L[f_{\theta_t}(\vx'), f(\vx')]$. We call it \emph{teacher-target}. It incorporates the deviation of $\vx'$ from $\vx$ and thus is more accurate than label-target.

\subsection{Why adversarial training helps model robustness?} 
Given all the previous analysis, it is now clear that by adding adversarial samples during training, we implicitly augment data region $R$ along its ``weak'' directions and thus improve student specialization (Theorem~\ref{thm:proj-specialization-noise} and Corollary~\ref{col:proj-unspecialized}). Similar effects can also be achieved by data augmentation and/or adding noise. In the next section, we will verify these findings with extensive experiments. 

\vspace{-0.05in}
\section{Experiments}
\vspace{-0.05in}
In this section, we aim to verify the strong positive correlation between the student specialization and the robustness of the student model with respect to the oracle (i.e., the teacher) in various scenarios.

We control the degree of specialization by training the student model with different epochs, as well as using \textit{adversarial training} adapted for the \tsframework framework.
In addition, we conduct studies on Confidence-Calibrated Adversarial Training (CCAT)~\citep{stutz2019confidence} to further verify the relationship between neuron specialization and model robustness. We also discussed the robust feature \citep{madry2017towards} in our \tsframework setting and left the details to Appendix~\ref{sec-robustfeature}.
\vspace{-0.05in}
\subsection{Experimental setup}
\vspace{-0.05in}
We use CIFAR-10~\citep{krizhevsky2009learning} as our dataset in experiments, and consider both the teacher and student model to be the $4$-layer Conv ReLU networks.
We train the teacher with channel size $64-64-64-64$ at first, and then reduce it to be $45-32-32-20$ by pruning the inactivated channels\footnote{We define the channel $k$ to be inactivated by considering the norm of the fan-out weights.}. For the student model, we set it to be $1.1$x scale to the pruned teacher model (i.e. channel size $50-35-35-22$). We also investigate deeper Conv network structure by adding one more Conv layer with channel size as $64$ to further solidify our conclusion. We set each Conv layers' kernel size $s = 3$ for both teacher and student models.

In our experiments, we consider two \textbf{Standard Training (ST)} strategies. 
\textbf{Logit training}: minimize the $\ell_2$ distance between the teacher and student' output logits.
\textbf{Label training}: minimize the cross-entropy between the student's logit and the teacher's prediction. We also consider \textbf{Adversarial Training (AT)} by training the student with \textit{oracle-adversarial examples} generated with Eq.(\ref{eq:backprop-adv-oracle}), where we apply the $40$-iteration $l_\infty$ PGD attack with perturbation scale $\epsilon=10/255$ and step size $\alpha=0.01$.


\vspace{-0.1in}
\subsection{Evaluation metrics}
\vspace{-0.1in}
We use the \textbf{Normalized Correlation (NC)}~\citep{tian2019luck} and its variants to measure the neuron specialization of the student to the teacher. Basically, we define $\vf_i$ to be the activations of node $i$. For student's node $k$ and teacher's node $j$, $\rho_{kj}$ is defined as the cosine similarity between the normalized activations: $\rho_{kj}= \mathbf{\tilde{f}}^\top_k \mathbf{\tilde{f}}^\ast_j$, where $\tilde{\mathbf{f}}_k = (\mathbf{f}_k - \text{mean}(\mathbf{f}_k)) / \text{std}(\mathbf{f}_k)$. Then we define the variants of the NC as follows:

\textbf{Best Normalized Correlation (BNC)} $\hat{\rho}_j$: For each teacher node $j$ in layer $l$, we find the highest NC among student's $l$-th layer nodes ($l_s$):  $\hat{\rho}_j = \max_{k \in l_s}\rho_{kj}$.

\textbf{Mean of the Best Normalized Correlation (MBNC)} $\bar\rho_l$: We compute the mean of the BNC $\hat\rho$ over teacher's $l$-th layer nodes ($l_t$):  $\bar\rho_l = \text{mean}_{j \in l_t}\hat\rho_j$.

We also show the \textbf{Sorted BNC Curve} by sorting the BNC $\hat{\rho}$ of the teacher's nodes and concatenating the adjacents. Then we can compare students' alignment to one teacher by visualizing the curves for each layer.

\subsection{Warm-up: strong correlation between $\epsilon_\mathrm{in}$, $\epsilon_\mathrm{out}$ and normalized correlation}

First, we report $\epsilon_{\text{in}}$ and $\epsilon_{\text{out}}$ in Eqn.~\ref{eq:empirical-model} 
between the student and teacher nodes in the lowest (first Conv) layer, and study its correlation with Normalized Correlation. This is to validate our empirical model (Sec.~\ref{sec:empirical-model}) and lay the foundation of our next analysis.  

With the lowest layer's kernel size $s=3$, each input with shape $(3, 32, 32)$ can be decomposed into $30 \times 30$ patches, and each patch has $3\times 3 \times 3 = 27$ dimensions. To show the inputs' low-rank property, we perform PCA\citep{pearson1901liii} on the $27$-dimensional inputs, and the fast-decaying eigenvalues (Figure~\ref{fig:pca}) show their low-rank structure. We choose the eigenvectors with $17$ largest eigenvalues to form the basis $\mathbf{U}$ of the input distribution $\mathcal{X}$, and compute $\epsilon_{\text{in}}$ and $\epsilon_{\text{out}}$ between student node $k$ and teacher node $j$ as follows. Note that here we define $\Delta \vw_{jk} := \vw_k / \|\vw_k\|_2 - \vw^\ast_j/\|\vw^\ast_j\|_2$ (Following Sec.~\ref{sec:empirical-model}, both $\vw_k$ and $\vw^\ast_{j}$ need to be normalized): 

\vspace{-0.25in}
\begin{align*}
\centering
\epsilon_{\text{in}}[k,j] &= \|\mathbf{UU^\top}\Delta\vw_{jk}\|_2,\\  \epsilon_{\text{out}}[k,j] &= \|(\mathbf{I-UU^\top})\Delta\vw_{jk}\|_2
\end{align*}
\vspace{-0.25in}

To show the correlation between $\epsilon_{\text{in}}$ and NC, we use standard training and plot $(\rho_{kj}, \epsilon_{\text{in}}[k,j])$ for every pair of 
$k$ and $j$ in Figure~\ref{fig:pca}. We show strong negative correlation trends interpreted by Pearson score: small $\epsilon_{\text{in}}[k,j]$ indicates large NC
. We draw the $\epsilon_{\text{in}}$ and $\epsilon_{\text{out}}$ curve by sorting the $\epsilon_{\text{in}}[k,j],\epsilon_{\text{out}}$ value between every teacher node $j$ and the student node $k$ with the highest NC. $\epsilon_{\text{in}},\epsilon_{\text{out}}$ curves show how well the student is specialized to the teacher from the in/out-plane direction.


\begin{figure}
\centering

\includegraphics[width=\linewidth]{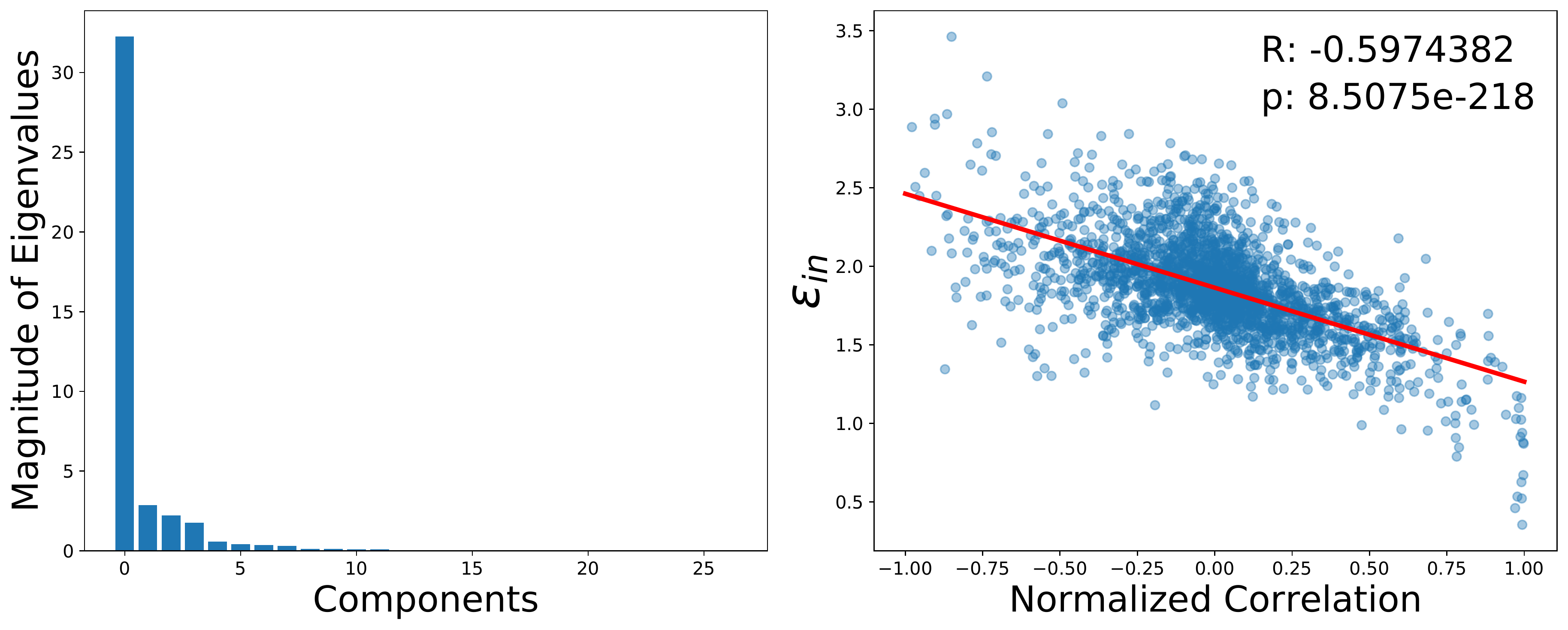}
\vspace{-2em}
\caption{\small \textbf{Left}: Magnitude of Eigenvalues on each PCA components of the input distribution $\mathcal{X}$. \textbf{Right}: Correlation between $\epsilon_{\text{in}}$ and \textbf{NC} under standard training.}
\label{fig:pca}
\end{figure}
\subsection{Adversarial training}
In this subsection, we analyze how the Adversarial Training (AT) affects the model robustness and student specialization, measured by normalized correlation.

We run AT for different training epochs $T \in \{50, 100, 150, 200, 300\}$, and compare them to Standard Training (ST) with logit for $300$ epochs. 
We check \textbf{Robust Accuracy}, defined as the ratio of successful predictions of the argmax labels of the \textit{adversarial examples}, which are generated by increasing the $\ell_2$ distance between the student and the teacher's output logits. We also show Sorted BNC and $\epsilon_{\text{in}}, \epsilon_{\text{out}}$ curves for each setting to check the node specialization.


\begin{table*}
\centering
\small
\caption{\small Robust evaluation of student models trained for different epochs (numbers in the parentheses) under Adversarial Training (AT) and Standard Training (ST). Results are reported by the mean $\mu$ and variance $\sigma^2$ of model robust accuracy (\%) against various attacks.}
\scalebox{0.9}{
\begin{tabular}{c|c|c|c|c|c|c|c|c|c|c|c|c}
\toprule
\multirow{2}{*}{Attacks} & \multicolumn{2}{c|}{AT (50)} & \multicolumn{2}{c|}{AT (100)} & \multicolumn{2}{c|}{AT (150)} & \multicolumn{2}{c|}{AT (200)} & \multicolumn{2}{c|}{AT (300)} & \multicolumn{2}{c}{ST (300)} \\ \cline{2-13} 
                         & $\mu$       & $\sigma^2$      & $\mu$       & $\sigma^2$       & $\mu$       & $\sigma^2$       & $\mu$       & $\sigma^2$       & $\mu$       & $\sigma^2$       & $\mu$       & $\sigma^2$       \\ \hline
$\ell_\infty$ PGD        & 78.79       & 1.1e-4        & 84.02       & 4.2e-5         & 87.57       & 2.8e-5         & 88.01       & 8.2e-6         & 88.20       & 2.9e-5         & 74.39       & 4.2e-5         \\
$\ell_2$ PGD             & 91.85       & 7.3e-6        & 95.98       & 7.0e-6         & 96.21       & 4.7e-6         & 95.95       & 4.8e-6         & 96.31       & 5.4e-6         & 94.01       & 1.0e-5         \\
$\ell_1$ PGD             & 92.30       & 9.9e-6        & 96.18       & 6.5e-6         & 96.56       & 3.1e-6         & 96.36       & 3.0e-6         & 96.59       & 3.7e-6         & 94.51       & 4.8e-6         \\
FGSM                     & 90.65       & 7.7e-6        & 95.18       & 4.7e-6         & 95.55       & 6.8e-6         & 94.87       & 5.3e-6         & 95.28       & 4.9e-6         & 91.12       & 2.1e-5         \\
CW                       & 78.89       & 1.0e-4          & 91.50       & 4.9e-5         & 91.96       & 4.3e-5         & 94.52       & 2.2e-5         & 92.63       & 1.3e-5         & 86.19       & 2.7e-5         \\
Blackbox-transfer        & 43.14       & 2.6e-5        & 45.31       & 4.7e-5         & 46.06       & 4.2e-5         & 46.68       & 3.7e-5         & 46.99       & 2.1e-5         & 43.48       & 3.2e-5         \\ \bottomrule
\end{tabular}}

\vspace{-0.5em}
\label{tab:attack-result}
\end{table*}

\begin{table}[!htbp]
\centering
\small
\caption{\small Robust Accuracy (\%) of \{In-plane, Out-plane, Standard\} AT models trained for 150 epochs against \{In-plane, Out-plane, Standard\} adversarial attacks.}
\scalebox{0.9}{
\begin{tabular}{c|c|c|c}
\toprule
               Attacks & In-plane & Out-plane & Standard  \\ \hline
AT (In-plane)  & 88.86          & 89.18           & 89.28          \\ \hline
AT (Out-plane) & 83.11          & 83.54           & 83.60          \\ \hline
AT (Standard)  & 86.87          & 87.28           & 87.18          \\ \bottomrule
\end{tabular}}
\label{tab:atinoutattack}
\end{table}

\begin{figure}
\centering
\includegraphics[width=\linewidth]{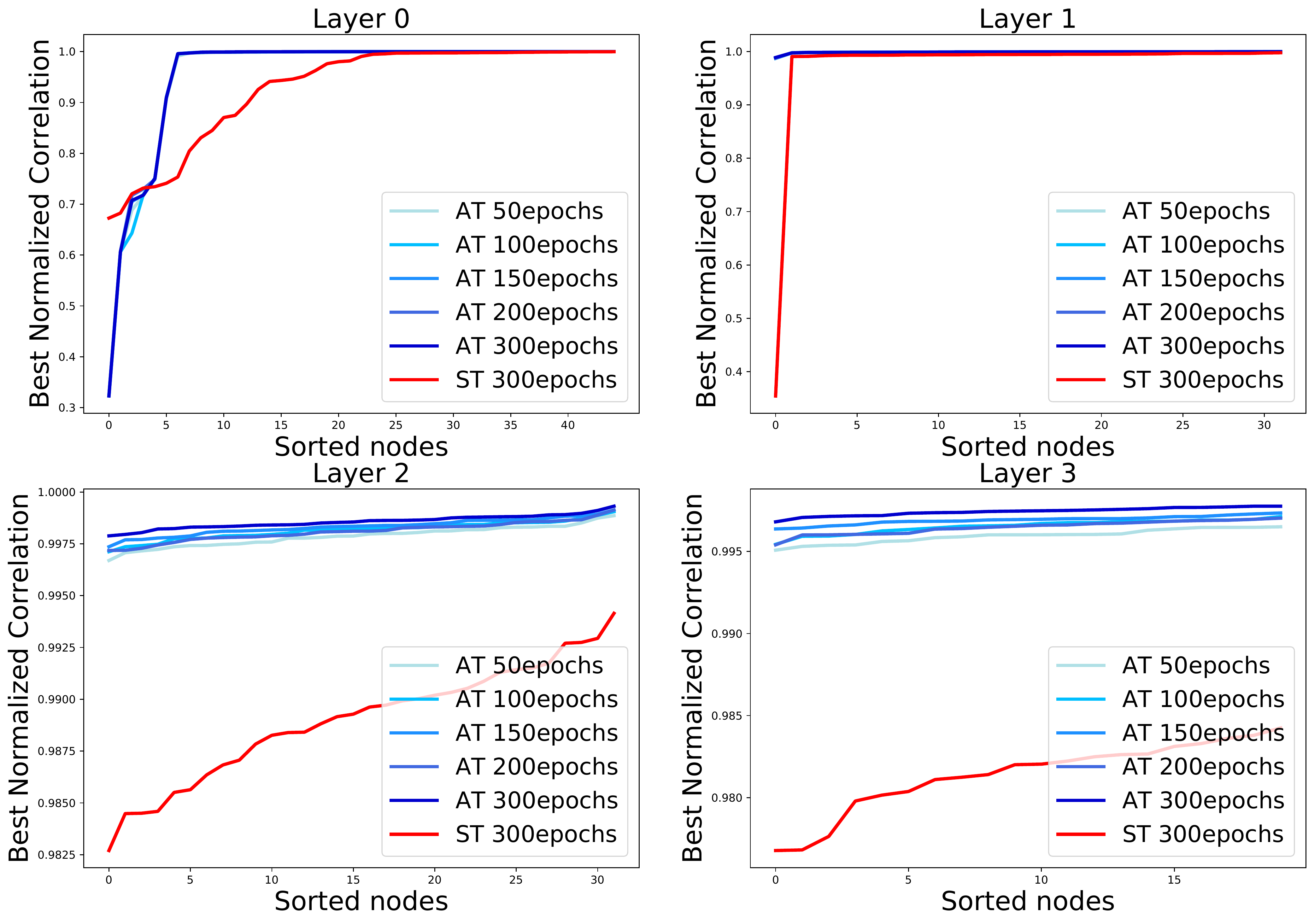}
\vspace{-1.5em}
\caption{\small Sorted BNC curve for Adversarial Training (AT) and Standard Training (ST) with logit for $300$ epochs.}
\label{fig:atvsrt}
\end{figure}

\begin{figure}
\centering
\includegraphics[width=\linewidth]{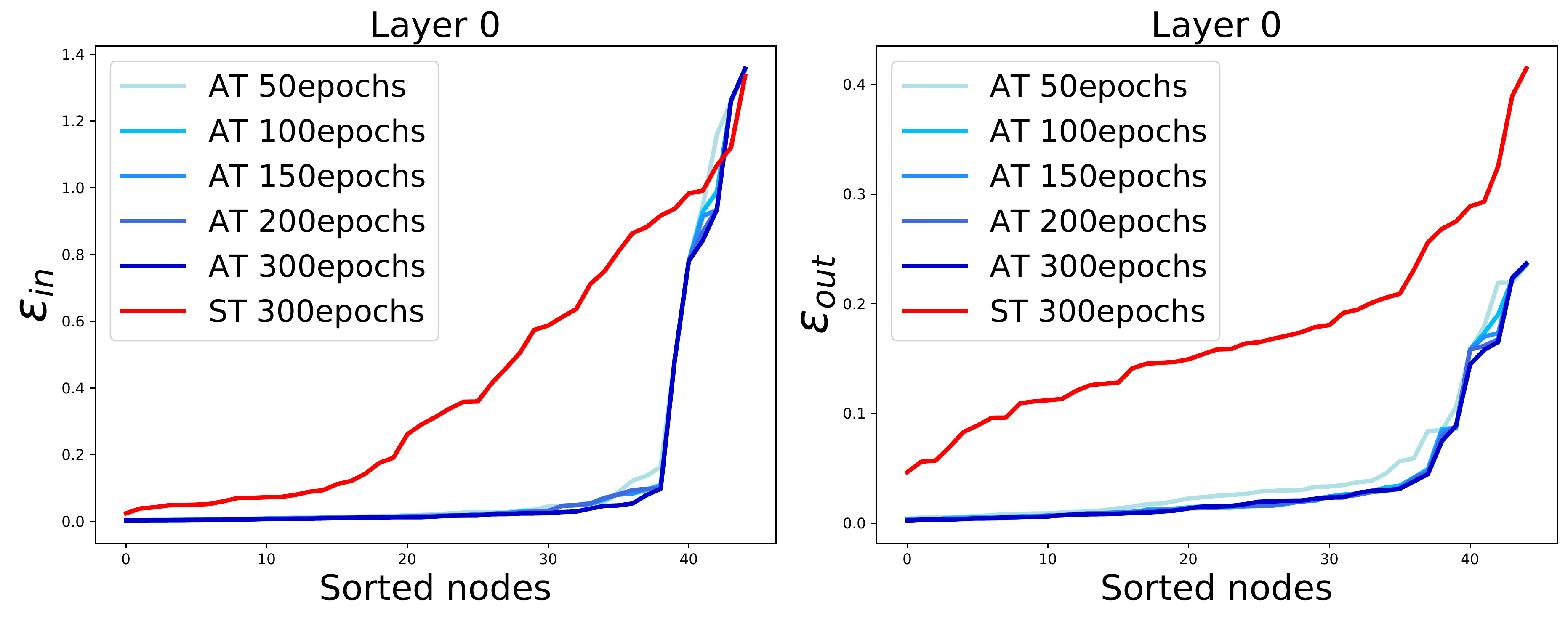}
\caption{\small ($\epsilon_{\text{in}}$, $\epsilon_{\text{out}}$) curve (lower curve means better specialization) for Adversarial Training (AT) and Standard Training (ST) with logit for $300$ epochs. AT leads to much stronger student specialization and higher robust accuracy.}
\label{fig:atvsst}
\vspace{-0.1in}
\end{figure}

\begin{figure*}
\centering
\includegraphics[width=\linewidth]{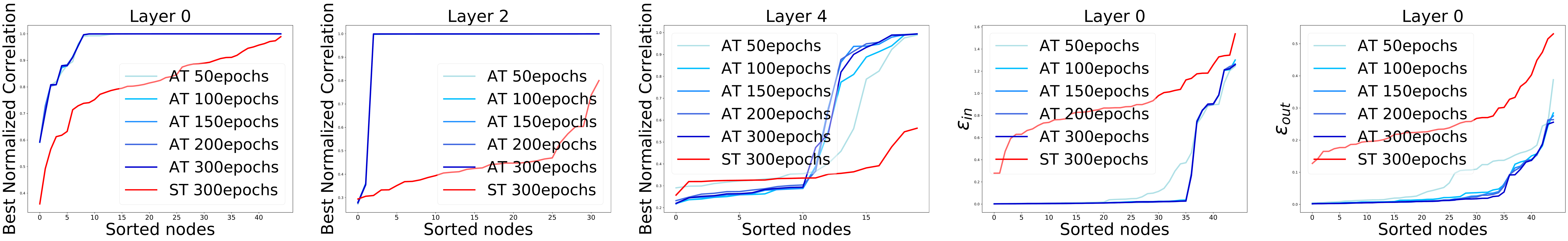}
\caption{\small \{\textbf{Left}: Sorted BNC curve on Layer 0, 2, 4; \textbf{Right}: $\epsilon_{\text{in}}$ and $\epsilon_{\text{out}}$ curve\} using a deeper Conv network architecture.}
\label{fig:deepnet}
\end{figure*}

\begin{figure}
\centering
\vspace{-0.2in}
\includegraphics[width=\linewidth]{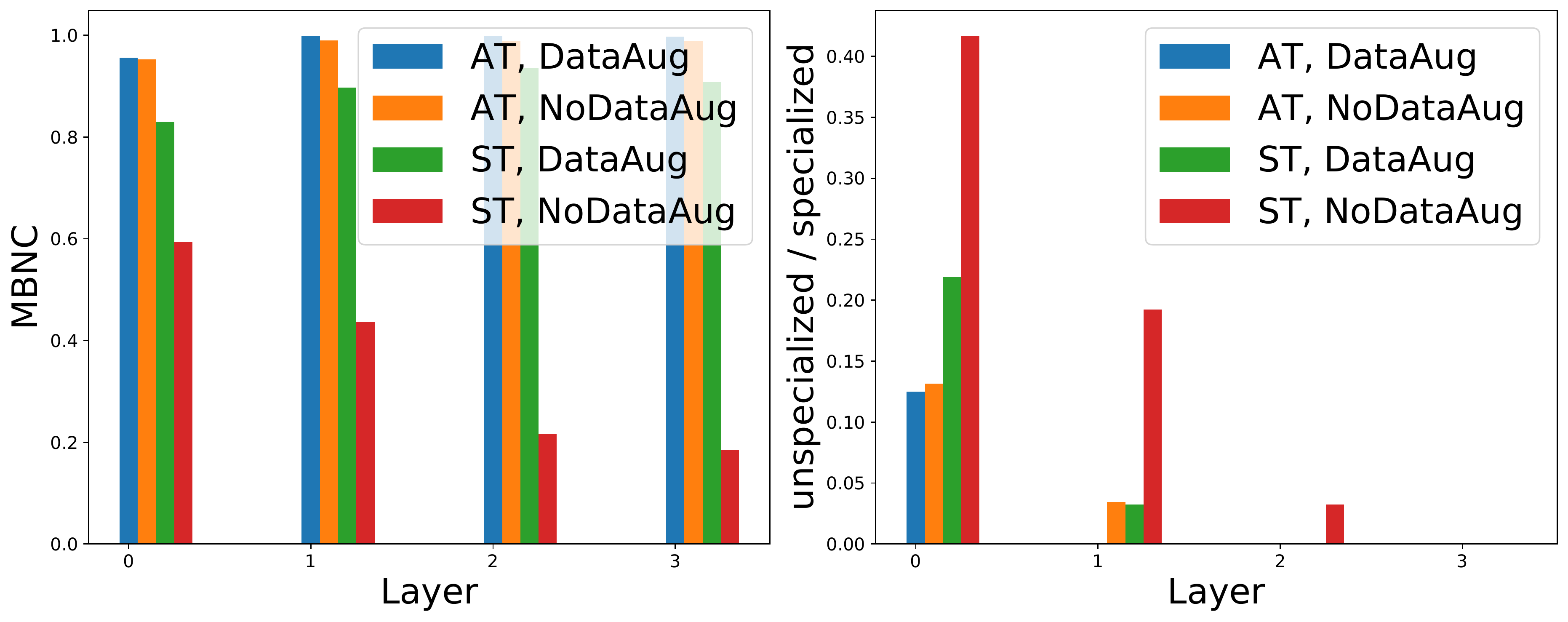}
\vspace{-0.2in}
\caption{\small Comparison between Adversarial Training (AT) and Standard Training (ST) with/without data augmentation. \textbf{Left}: The MBNC value $\hat{\rho}$ for every layer. \textbf{Right}: The ratio of the number of the unspecialized nodes divided by the number of the specialized nodes in every layer.}
\vspace{-0.1in}
\label{fig:aug}
\end{figure}

\begin{figure}
\centering
\includegraphics[width=\linewidth]{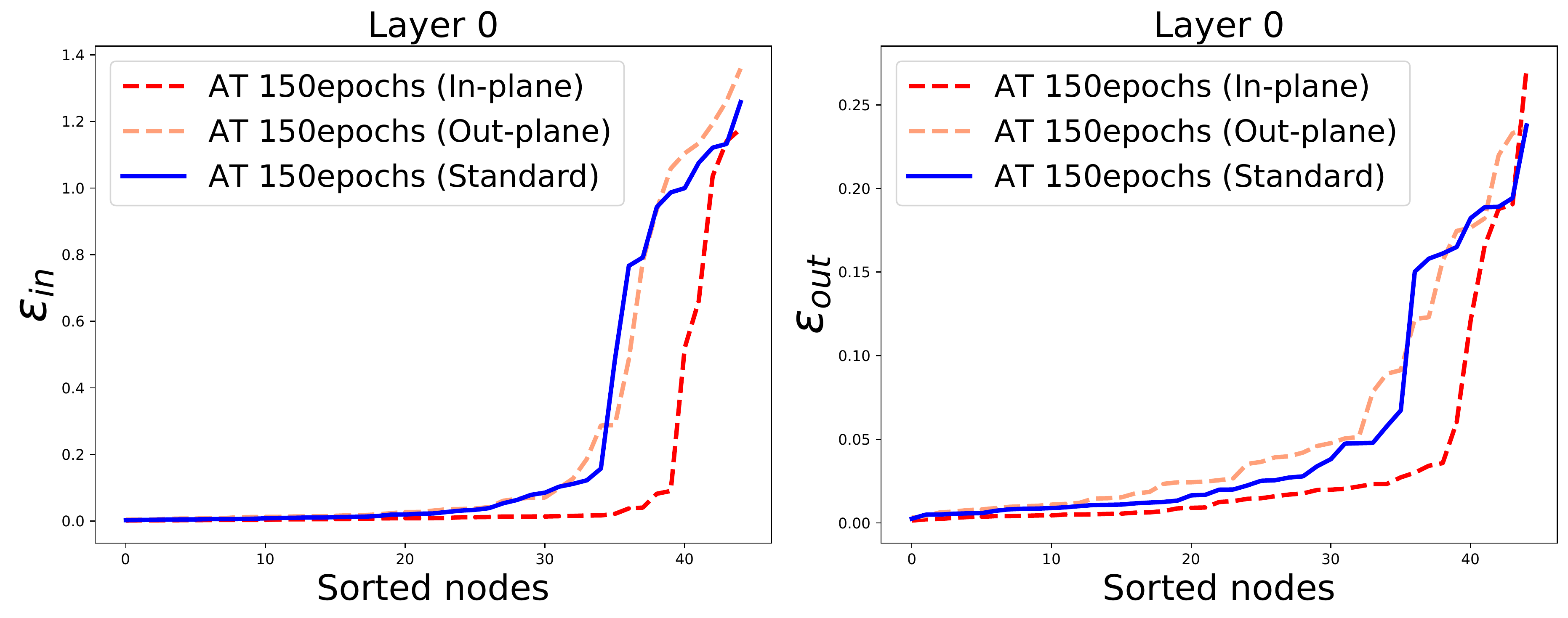}
\caption{\small ($\epsilon_{\text{in}}$, $\epsilon_{\text{out}}$) curve for \{in-plane, out-plane, standard\} Adversarial Training (AT) with $150$ epochs.}
\label{fig:atinout}
\end{figure}
\begin{figure}
\vspace{-0.3in}
\centering
    \includegraphics[width=0.48\textwidth]{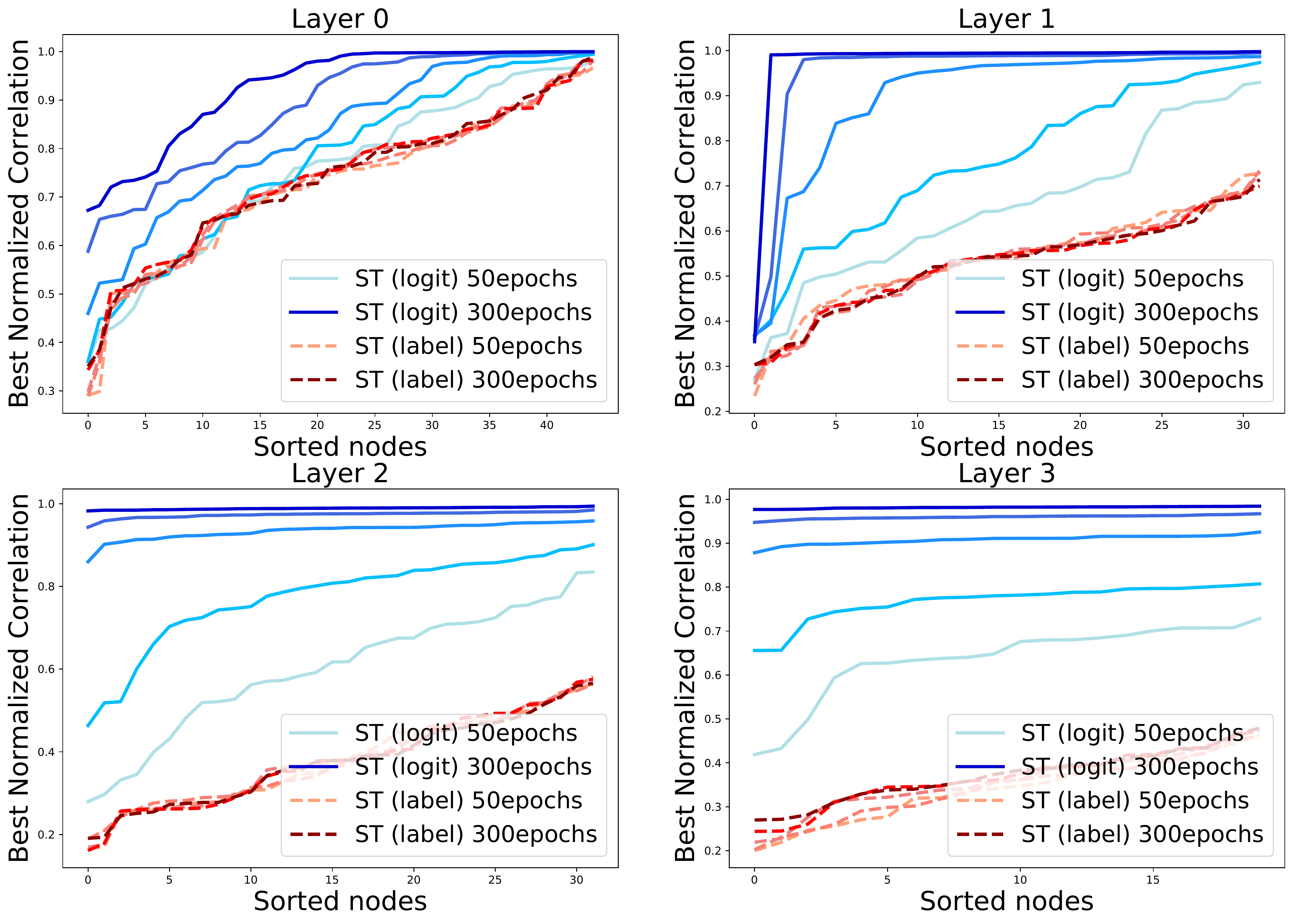}
    \vspace{-0.3in}
    \caption{\small Sorted BNC curves (the higher means better specialization) for Standard Training (ST) with logit or label for different epochs on CIFAR-10. Logit training leads to much stronger specialization across all layers. Solid \textcolor{blue}{\textbf{Blue}} curves refer to the logit training and dashed \textcolor{red}{\textbf{Red}} curves to the label training. Color changed from light to dark with more training epochs.}
    \label{fig:logitepochs}
\vspace{1em}
\centering
    \includegraphics[width=0.48\textwidth]{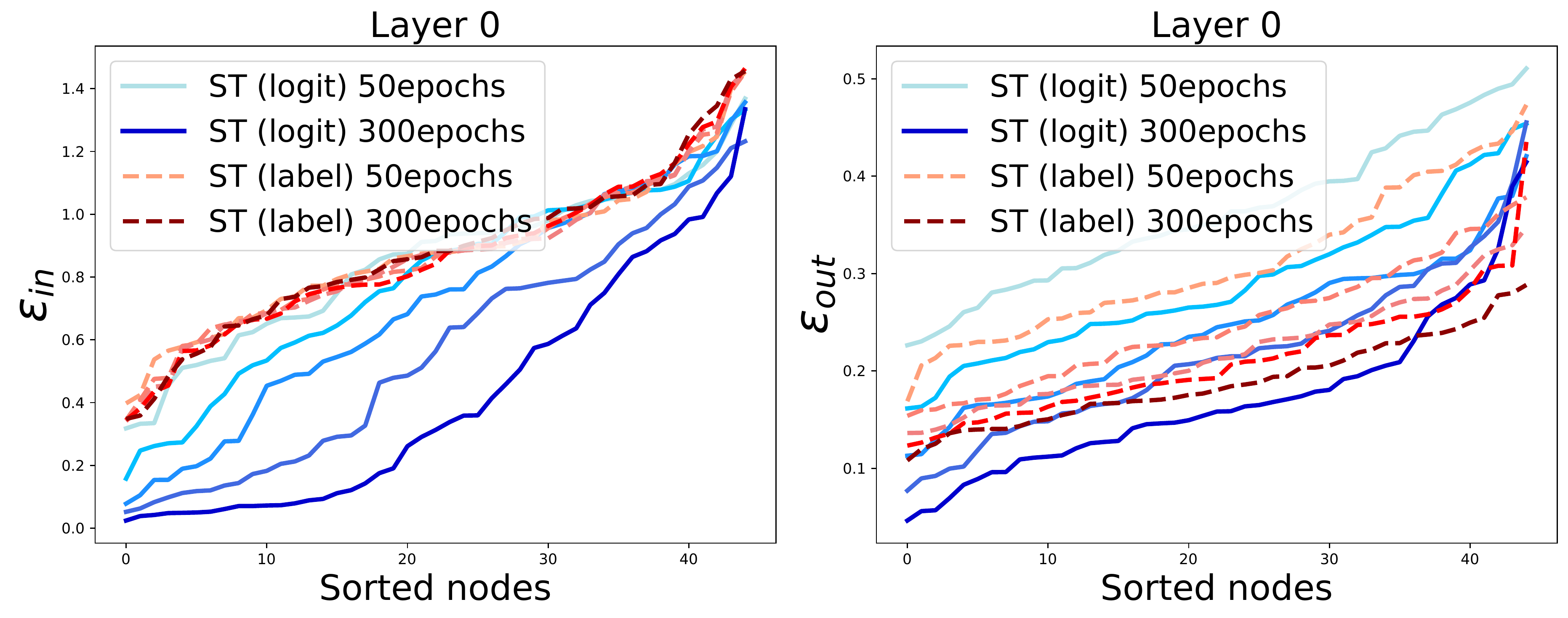}
    \vspace{-0.3in}
    \caption{\small ($\epsilon_{\text{in}}$, $\epsilon_{\text{out}}$) curves for Standard Training (ST) with logit or label for different epochs on CIFAR-10. Solid \textcolor{blue}{\textbf{Blue}} curves refer to the logit training and dashed \textcolor{red}{\textbf{Red}} curves to the label training (which reduces $\epsilon_\tout$ more). Colors are changed from light to dark with more training epochs.}
    \vspace{-0.5em}
    \label{fig:logitepochs2}
\end{figure}

\begin{figure}
\vspace{-0.2em}
\centering
    \includegraphics[width=0.48\textwidth]{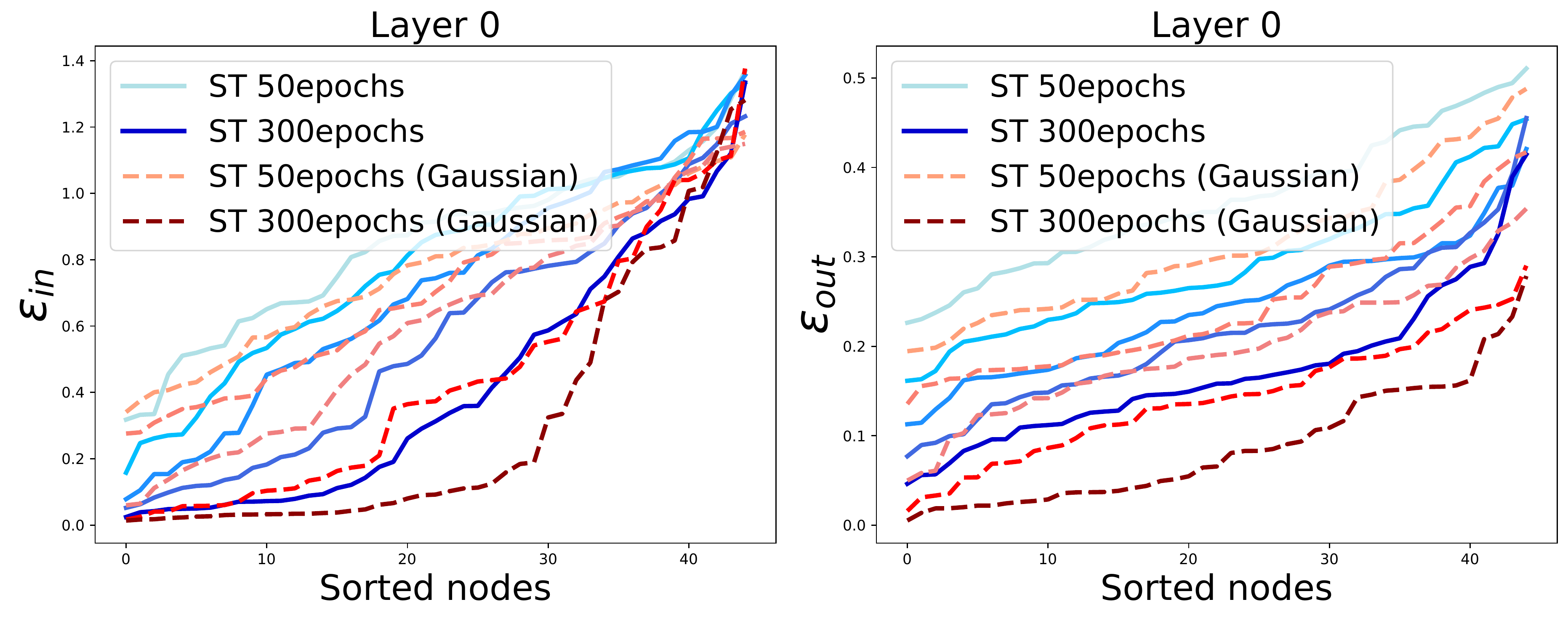}
    \vspace{-0.3in}
    \caption{\small ($\epsilon_{\text{in}}$, $\epsilon_{\text{out}}$) curves for ST (logit) with or without Gaussian augmentation for different epochs on CIFAR-10. Solid \textcolor{blue}{\textbf{Blue}} curves refer to ST (logit) without Gaussian and dashed \textcolor{red}{\textbf{Red}} curves refer to ST (logit) with Gaussian. Adding Gaussian leads to better specialization. Colors are changed from light to dark with more training epochs.}
    \vspace{-1.5em}
    \label{fig:highrank}
\end{figure}

We conduct various types of attacks to generate adversarial examples: $\{\ell_1, \ell_2, \ell_\infty\}$ optimization based PGD attack~\citep{madry2017towards}, FGSM attack~\citep{goodfellow2014explaining}, CW attack~\citep{carlini2017towards} and Blackbox-transfer attack using a surrogate model trained independently. We run robustness evaluation multiple times to compute statistical confident robust accuracy with mean $\mu$ and variance $\sigma^2$. From Table~\ref{tab:attack-result}, we can see AT model's robustness increases with epochs and surpasses the 300 epochs ST (logit) model's even at 50 epochs. Figure~\ref{fig:atvsrt} and~\ref{fig:atvsst} show the neuron specialization of the student by plotting $\epsilon_{\text{in}}, \epsilon_{\text{out}}$ and Sorted BNC  curve, where AT models achieve \emph{much better specialization} than ST models by reducing $\epsilon_{\text{in}}$ and $\epsilon_{\text{out}}$ drastically in the first few epochs. For $5$-layer deeper Conv network, we also observe similar results as shown in Figure~\ref{fig:deepnet}.

We also evaluate how in-plane AT and out-plane AT affect student’s specialization separately. To disentangle them, for each instance $\vx$, we apply the standard $\ell_\infty$ PGD attack twice with different initialization to obtain $\vx'_{\text{in}}$ and $\vx'_{\text{out}}$, while $\vx'_{\text{in}}$ has a smaller distance to the input subspace.
We use $\vx'_{\text{in}}$ to train the in-plane AT model and $\vx'_{\text{out}}$ for out-plane AT model, and we evaluate each model's Robust Accuracy against the in-plane attack, out-plane attack, or both (i.e., the standard attack).
From Table~\ref{tab:atinoutattack},
we find the in-plane attack can be more severe causing model's vulnerability, so in-plane AT models achieve better robustness.
In Figure~\ref{fig:atinout}, the plots for $\epsilon_{\text{in}}$ and $\epsilon_{\text{out}}$ indicate that the in-plane AT model leads to better specialization from both in-plane and out-plane directions. 

We evaluate the \textbf{Mean of the Best Normalized Correlation (MBNC)} and 
the \emph{unspecialized}/\emph{specialized} ratio\footnote{We consider the node to be \emph{unspecialized} if NC is smaller than 0.8, and \emph{specialized} if NC is larger than 0.9.} of AT and ST models trained for 300 epochs in Figure~\ref{fig:aug}. We observe that AT model could achieve higher MBNC value by forcing more student nodes to be specialized to teacher nodes, and the traditional Data Augmentation method (RandomCrop, HorizontalFlip, Rotation) could improve neuron specialization as well.

\subsection{Standard training}
\label{sec-standard}



In this subsection, we continue to study the correlation between model robustness and specialization to the teacher in Standard Training (ST) with logit or label's supervision. Our analyis is performed at different training epochs $T\in\{50, 100, 150, 200, 300\}$. 


\begin{table}[!htbp]
\centering
\caption{\small Robust Accuracy (\%) of student models trained for different epochs (numbers in the parentheses) under Standard Training (ST) with logit or label supervision.}
\begin{small}
\scalebox{0.77}{
\begin{tabular}{c|c|c|c|c|c}
\toprule
        Robust Acc        & ST (50)               & ST (100)              & ST (150)              & ST (200)              & ST (300)             \\ \hline
 Logit training & 23.12 & 30.72 & 36.72 & 48.52 & 62.77 \\ \hline
 Label training & 19.08 & 20.81 & 22.34 & 23.42 & 25.79 \\ \hline
\end{tabular}}
\end{small}
\label{tab:epochsra}
\end{table}

From Table~\ref{tab:epochsra} and Figure~\ref{fig:logitepochs}, we can observe both robustness and specialization of ST models improved with training. However, when training with the same epochs, ST (label) model is worse than ST (logit) model from both robustness and specialization perspectives.

Moreover, in Figure~\ref{fig:logitepochs2}, we show the specialization of ST (logit) model and ST (label) model from the in-plane and out-plane directions. Interestingly, ST with label does not improve the in-plane specialization. In contrast, ST with logit leads to specialization on both in-plane and out-plane aspects.

To check the low-rank property of input distribution $\mathcal{X}$, we add the $d$-dimensional Gaussian noise $\bm{\epsilon}\sim\mathcal{N}(0,\sigma^2\bm{I}_d), \sigma=0.1$ on the input instances during the ST with logit, and we present $\epsilon_{\text{in}}, \epsilon_{\text{out}}$ curves in Figure~\ref{fig:highrank}. 
From Figure~\ref{fig:highrank}, we can observe that by training with the high-rank input instances, the student can be more specialized to the teacher from both in-plane and out-plane directions. Meanwhile, the low-rank property brings the risk facing the out-plane adversarial examples.

\noindent\textit{Remarks.} 
We suggest the existence of adversarial examples is due to student's \emph{unspecialized} neurons (large $\epsilon_{\text{in}}$). During training, ST decreases student nodes' $\epsilon_{\text{in}}$, improves neuron specialization and therefore leads to better robustness.
Also, comparing to ST with label, ST with logit can leverage the additional direction information from the teacher output, and achieve better neuron specialization and robustness, which verifies our claim about the strong correlation between robustness and specialization.






\vspace{-0.1in}
\subsection{Analysis of Confidence-Calibrated Adversarial Training}
\begin{figure}
\centering
\includegraphics[width=\linewidth]{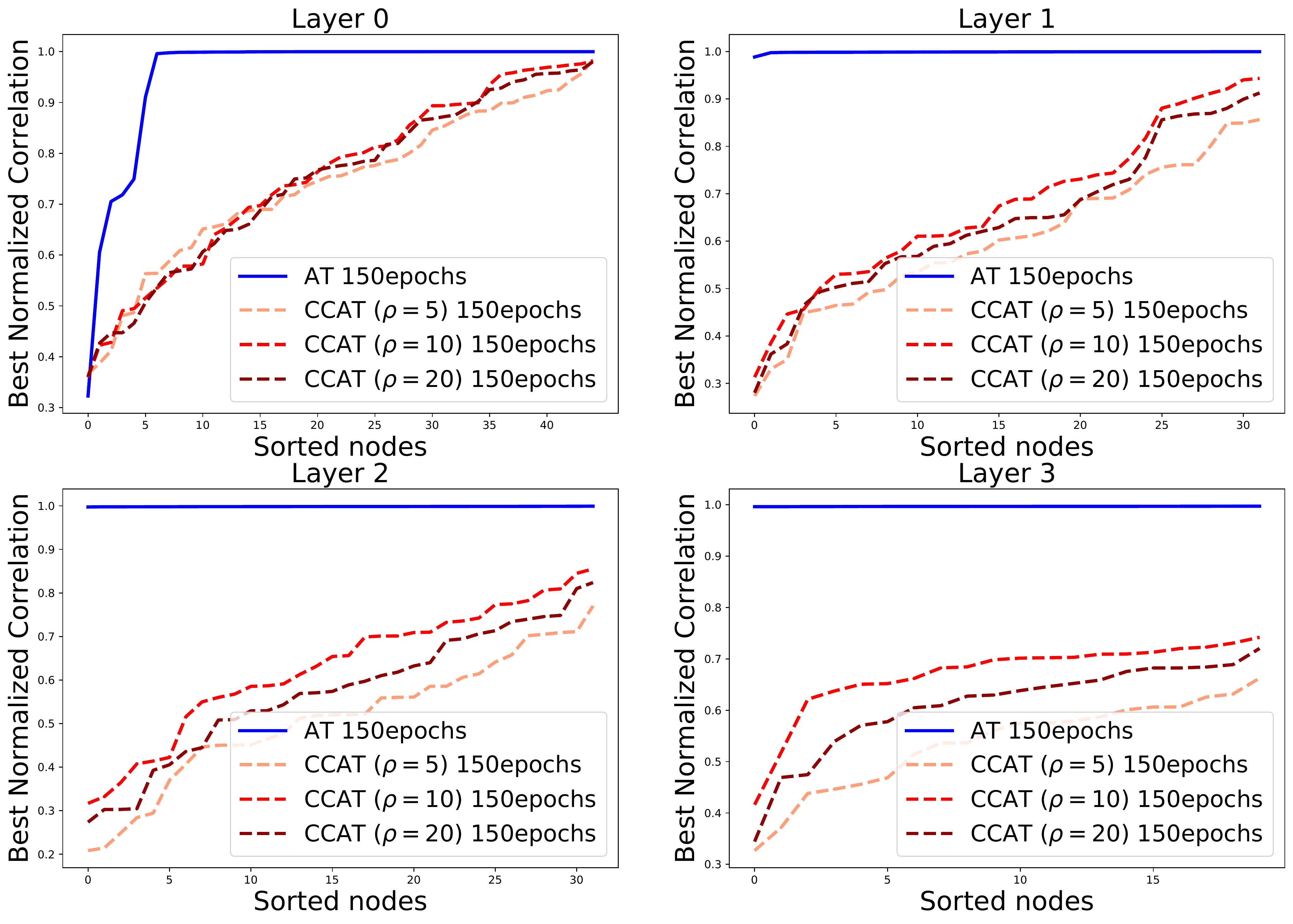}
\caption{\small Sorted BNC curve of student trained by CCAT. Specialization correlates with robust accuracy (Table~\ref{tab:ccattable}).}
\label{fig:ccat}
\end{figure}

We also extend our analysis to other training techniques that improve the robustness of the model. \textbf{Confidence-Calibrated Adversarial Training (CCAT)}~\citep{stutz2019confidence} proposes to generate high confidence adversarial examples with calibrated soft labels. 
Specifically, for an input $(\vx,\vy)$, adversarial example $\mathbf{x+\delta}$ is generated as:
    $\delta =  \arg\max_{||\delta||_\infty \leq \epsilon} \max_{\mathbf{k} \neq \vy} f^C_{\mathbf{k}}(\vx+\delta)$,
where $f^C_{\mathbf{k}}(\vx)$ denotes model $f$'s output confidence on label $\mathbf{k}$, and $\epsilon$ denotes the tolerance of $\ell_\infty$ perturbation scale. The confidence parameter $\lambda(\delta)$ is decided by the $\ell_\infty$ norm of $\delta$ and the hyper-parameter $\rho$:
   $\lambda(\delta) = (1 - \min(1, ||\delta||_\infty/\epsilon))^\rho$,
and the confidence-calibrated soft label $\tilde{\vy}$ is obtained by mixing the one-hot vector of the label $y$ with the confidence:
    $\tilde{\vy} = \lambda(\delta)\text{one\_hot}(\vy) + (1 - \lambda(\delta))\frac{1}{K}$, 
where $K$ refers to the number of labels.

\begin{figure}[!t]
\centering
\includegraphics[width=\linewidth]{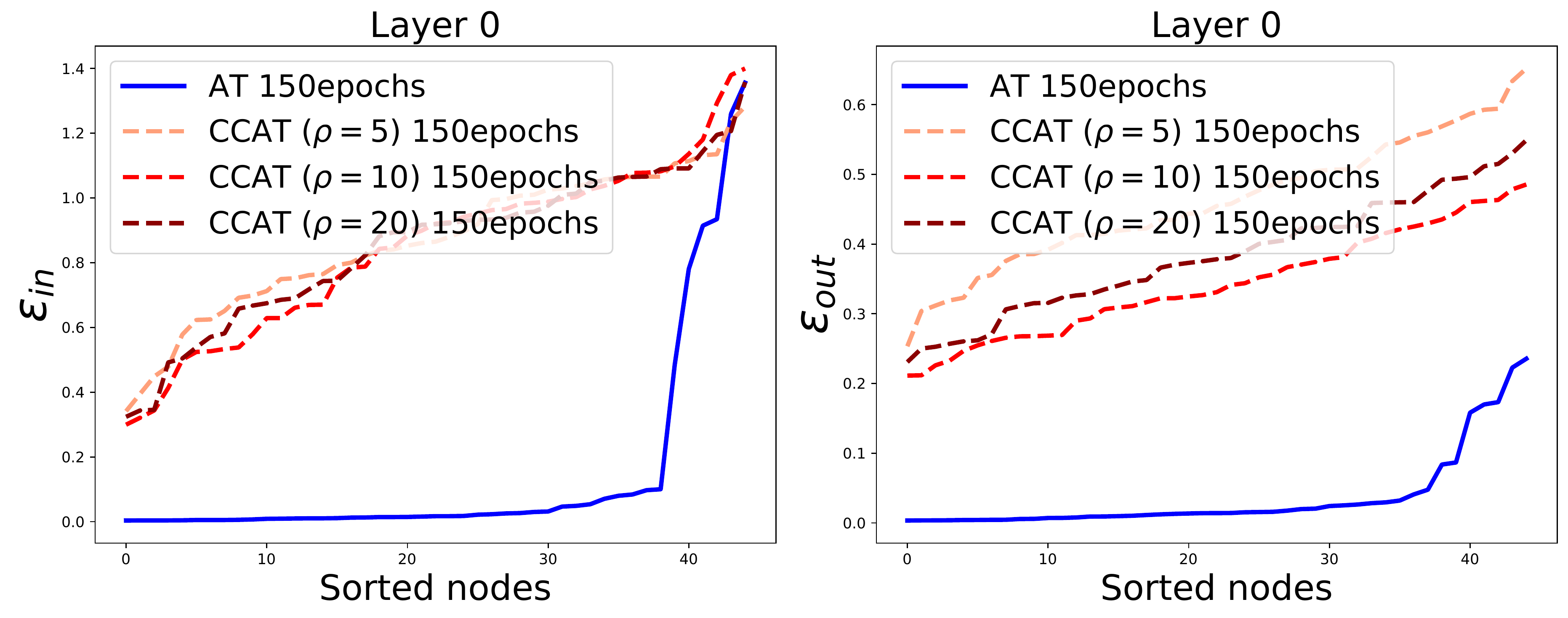}
\caption{\small $(\epsilon_\tin,\epsilon_\tout)$ curve of student models trained by CCAT with different $\rho$. Here ``AT'' means that we use vanilla Adversarial Training with oracle-adversarial samples (Eqn.~\ref{eq:backprop-adv-oracle}), which leads to much better specialization.}
\label{fig:ccateps}
\end{figure}

In the teacher-student setting, we consider the \textbf{confidence} to be the $\ell_\infty$ distance between student and teacher's logit. We generate the high-confidence adversarial example by:
$\delta = \arg\max_{||\delta||_\infty \leq \epsilon} \max_{\mathbf{k}} |s_{\mathbf{k}}(\vx+\delta)-t_{\mathbf{k}}(\vx+\delta))|$,
where $s_{\mathbf{k}}(\vx), t_{\mathbf{k}}(\vx)$ refer to the output logit of student and teacher on label $\mathbf{k}$ respectively. We apply the confidence-calibrated soft label $\tilde{\vy}$ to the adversarial examples and evaluate the robustness and specialization of CCAT models trained for $150$ epochs, with $\rho=5,10,20$.
\begin{table}[!htbp]
\centering
\caption{\small Robust Accuracy (\%) of student models trained for $150$ epochs, with CCAT given different $\rho$ and AT.}
\scalebox{0.7}{
\begin{tabular}{c|c|c|c|c}
\toprule
         Model       & CCAT ($\rho=5$) & CCAT ($\rho=10$) & CCAT ($\rho=20$) & AT      \\ \hline
Robust Acc & 47.25      & 52.04       & 49.33       & 84.07 \\ \bottomrule
\end{tabular}}
\vspace{-2em}
\label{tab:ccattable}
\end{table}
Figure~\ref{fig:ccat} and Table~\ref{tab:ccattable} show the neuron specialization and robustness of CCAT models respectively. 
We find that CCAT with $\rho=10$ achieves the best robustness and neuron specialization among all CCAT models. Again, we notice that there is a strong correlation between specialization and robustness (e.g., $\rho=10$ achieves the highest degree of specialization \emph{and} robustness, $\rho=20$ achieves the second highest on both, and similarly for $\rho=5$). Figure~\ref{fig:ccateps} shows the $\epsilon_{\text{in}}, \epsilon_{\text{out}}$ curves, and indicates AT can improve specialization for both in-plane and out-plane directions.


\noindent\textit{Remarks.} Comparing different CCAT models with AT models, the results consistently show that the neuron specialization of student models is highly correlated with the robustness, which is aligned with our observation in Sec~\ref{sec-standard}.
In addition, AT models with better robustness may be due to the information loss during the confidence calibration: while the confidence calibration captures the balanced adversarial distribution, it will provide inconsistent confidence to the teacher's output. To align the confidence distribution with the teacher's output would be an interesting future work.


\vspace{-0.05in}
\section{Conclusion and Future Work}
\vspace{-0.05in}
In this paper, we leverage the \tsframework framework to study the model robustness and explain the origin of adversarial samples in a trained network. In our setting, we assume the labels to be the output of an \textit{oracle} teacher and student learns from the teacher through the teacher's output. In this setting, model vulnerability (and adversarial samples) naturally arise when the nodes (neurons) in a learned student do not fully reconstruct (or ``specialized into'') teacher's nodes when the input data are low-dimensional. Specifically, we theoretically show that, when training converges, student nodes are specialized in the low-dimensional input subspace, but may not be specialized out of such a subspace, leaving space for adversarial examples. Extensive experiments show a clear correlation between model robustness and degree of student specialization measured by normalized correlation between activations of teacher and student, in standard training, adversarial training (AT) and Confidence-Calibrated Adversarial Training (CCAT). Based on this new perspective, future work includes regularization of unspecialized student nodes during training, label-extrapolation of adversarial samples in AT, etc. 

\vspace{-0.5em}
\section{Acknowledgement}
\vspace{-0.5em}
This work is partially supported by NSF grant No.1910100 and DARPA QED-RML-FP-003 awarded to Bo Li.




\newcommand\sbullet[1][.5]{\mathbin{\vcenter{\hbox{\scalebox{#1}{$\bullet$}}}}}

\clearpage

\newpage
\appendix
\onecolumn
\aistatstitle{Supplementary Materials of \\ Understanding Robustness in Teacher-Student Setting: A New Perspective}
\renewcommand\thesection{\Alph{section}}
\renewcommand\thesubsection{\thesection.\arabic{subsection}}
\section{Proofs}
\subsection{Lemma}
\begin{lemma}
\label{lemma:projected-vector}
If $\tilde\vx = U\tilde \vy + \tilde \vx_0$, where $U\in \rr^{d\times d'}$, then the inner product $\vw^\t\vx$ in the original space can be written as the inner product in the reduced space $\vw_y^\t\vy$ with 
\begin{equation}
    \vw^\t\vx = \vw_y^\t\vy, \quad\mathrm{for}\ \vw_y := 
    \left[
  \begin{array}{c}
    \tilde\vw_y \\
    b_y
  \end{array}
  \right] = 
    \left[
  \begin{array}{c}
    U^\t \tilde\vw \\
    \tilde\vw^\t\vx_0 + b
  \end{array}
  \right]
\end{equation}  
\end{lemma}
\begin{proof}
Since the augmented vector $\vx := [\tilde \vx; 1] \in \rr^{d+1}$, the inner product $\vw^\t\vx$ can be written as:
\begin{equation}
    \vw^\t\vx := \tilde\vw^\t\tilde\vx + b = \underbrace{\tilde\vw^\t U}_{\tilde\vw^\t_y}\tilde\vy + \underbrace{\tilde\vw^\t\vx_0 + b}_{b_y}
\end{equation}
and the conclusion follows.
\end{proof}

\subsection{Theorem 1}
\begin{proof}
For low-dimensional input space $\cX$, we could always find a set of orthonormal bases $U = [\vu_1, \vu_2,\ldots, \vu_{d'}]$ so that for any point $\tilde\vx \in X$, we have $\tilde\vx = U\tilde\vy + \tilde\vx_0$. Therefore, by Lemma~\ref{lemma:projected-vector}, the inner product $\vw^\t\vx$ can be written as 
\begin{equation}
    \vw^\t\vx = \vw_y^\t\vy, \quad\mathrm{for}\ \vw_y := 
    \left[
  \begin{array}{c}
    \tilde\vw_y \\
    b_y
  \end{array}
  \right] = 
    \left[
  \begin{array}{c}
    U^\t \tilde\vw \\
    \tilde\vw^\t\vx_0 + b
  \end{array}
  \right]
\end{equation}  

Then $\vy$ is full-rank in $X$ and we can apply Lemma 3 in~\citet{tian2019student} for the reduced space of $\vy$ to draw the conclusion that for each teacher node $j$ whose boundary is observed by a student node $k$ with $\alpha_{jk} \neq 0$, there exists at least one student node $k'$ so that $\vw^*_{y,j} = \lambda\vw_{y,k}$ with $\lambda > 0$. Taking its first $d'$ components, we have $U^\t \tilde \vw^*_j = \lambda U^\t \tilde \vw_k$. Notice that $\proj_{\cX}[\tilde \vw_j^*] = UU^\t\tilde\vw_j^*$, we have $\proj_{\cX}[\tilde \vw_j^*] = \lambda\proj_{\cX}[\tilde \vw_k]$. 
\end{proof}

\subsection{Lemma~\ref{lemma:relation}} 
\begin{lemma}[Relation between Hyperplanes (Lemma 5 in ~\citet{tian2019student})]
\label{lemma:relation}
Let $\vw_j$ and $\vw_{j'}$ be two distinct hyperplanes with $\|\tilde\vw_j\| = \|\tilde\vw_{j'}\| = 1$. Denote $\theta_{jj'}$ as the angle between the two vectors $\vw_j$ and $\vw_{j'}$. Then there exists $\tilde\vu_{j'} \perp \tilde\vw_j$ and $\vw_{j'}^\t \tilde\vu_{j'} = \sin\theta_{jj'}$.
\end{lemma}
\subsection{Lemma~\ref{lemma:evidence-misalignment}}
\begin{lemma}[Evidence of Data points on Misalignment]
\label{lemma:evidence-misalignment}
Let $R \subset \rr^d$ be an open set. Consider $K$ ReLU nodes $f_j(\vx) = \sigma(\vw_j^\t\vx)$, $j = 1, \ldots, K$. $\|\tilde\vw_j\| = 1$, $\vw_j$ are not co-linear. Then for a node $j$ with $\partial E_j \cap R \neq \emptyset$, either of the conditions holds:
\begin{itemize}
  \setlength\itemsep{0.2em}
    \item[\textbf{(1)}] There exists node $j'\neq j$ so that $\sin\theta_{jj'} \le MK\epsilon/|c_j|$ and $|b_{j'} - b_j| \le M_2\epsilon/|c_j|$.
    \item[\textbf{(2)}] There exists $\vx_j \in \partial E_j \cap R$ so that for any $j'\neq j$, $|\vw^\t_{j'}\vx_j| > 5\epsilon/|c_j|$.
\end{itemize}
\vspace{-0.6em}

\newpage
where: 
\vspace{-0.6em}
\begin{itemize}
  \setlength\itemsep{0.05em}
    \item $\theta_{jj'}$ is the angle between $\tilde\vw_j$ and $\tilde\vw_{j'}$,
    \item $r$ is the radius of a $d-1$ dimensional ball contained in $\partial E_j \cap R$, 
    \item $M = \frac{10}{r}\sqrt{\frac{d}{2\pi}}$, $M_0 = \max_{\vx\in \partial E_j \cap R} \|\vx\|$ and $M_2 = 2M_0MK + 5$.
\end{itemize}
\end{lemma}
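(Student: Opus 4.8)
\textbf{Proof plan for Lemma~\ref{lemma:evidence-misalignment}.}

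The plan is to argue by contradiction: assume that condition (1) fails, i.e.\ for \emph{every} $j' \neq j$ we have $\sin\theta_{jj'} > MK\epsilon/|c_j|$ \emph{or} $|b_{j'}-b_j| > M_2\epsilon/|c_j|$, and then use a volume-covering argument on the boundary hyperplane $\partial E_j$ to produce the point $\vx_j$ asserted in condition (2). Set $q_j := 5\epsilon/|c_j|$ and for each $j'\neq j$ let $I_{j'} := \{\vx \in \partial E_j : |\vw_{j'}^\t\vx| \le q_j\}$ be the ``thin band'' around $\partial E_j \cap \partial E_{j'}$. The goal is to show that inside a $(d-1)$-dimensional ball $B \subseteq \partial E_j \cap R$ of radius $r$, the bands $I_{j'}$ cannot cover all of $B$, so there is a point $\vx_j \in B$ with $|\vw_{j'}^\t\vx_j| > q_j$ for all $j'\neq j$ — which is exactly condition (2).

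The core of the argument is to bound the $(d-1)$-volume of $I_{j'}\cap B$. I would split into two cases for each $j'$. First, when $\sin\theta_{jj'} > MK\epsilon/|c_j|$: restricting $\vw_{j'}^\t(\cdot)$ to the affine hyperplane $\partial E_j$, its gradient (within $\partial E_j$) has norm $\sin\theta_{jj'}$, so the set where $|\vw_{j'}^\t\vx|\le q_j$ on $\partial E_j$ is a slab of thickness at most $2q_j/\sin\theta_{jj'} \le 2q_j|c_j|/(MK\epsilon) = 10/(MK)$. Intersecting with the ball $B$, Fubini/slab estimate gives $V(I_{j'}\cap B) \le \frac{10}{MK} V_{d-2}(r)$, where $V_{d-2}(r)$ is the volume of a $(d-2)$-ball of radius $r$; I would invoke Lemma~\ref{lemma:relation} to make the ``thickness'' estimate precise, using the unit direction $\tilde\vu_{j'}\perp\tilde\vw_j$ with $\vw_{j'}^\t\tilde\vu_{j'}=\sin\theta_{jj'}$. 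Second, when $\sin\theta_{jj'} \le MK\epsilon/|c_j|$ but $|b_{j'}-b_j| > M_2\epsilon/|c_j|$: here I'd show $I_{j'}\cap B = \emptyset$ outright. Indeed for $\vx\in\partial E_j$ we have $\vw_{j'}^\t\vx = (\vw_{j'}-\vw_j)^\t\vx = (\tilde\vw_{j'}-\tilde\vw_j)^\t\tilde\vx + (b_{j'}-b_j)$, and $\|\tilde\vw_{j'}-\tilde\vw_j\| = 2\sin(\theta_{jj'}/2)\le 2\sin\theta_{jj'}\le 2MK\epsilon/|c_j|$, so by Cauchy--Schwarz the first term is at most $2M_0 MK\epsilon/|c_j|$ in absolute value; since $M_2 = 2M_0 MK + 5$ this forces $|\vw_{j'}^\t\vx| \ge |b_{j'}-b_j| - 2M_0MK\epsilon/|c_j| > 5\epsilon/|c_j| = q_j$, contradicting $\vx\in I_{j'}$.

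To close, I would use the ball-volume comparison $V_{d-1}(r) = V(B) \ge r\sqrt{2\pi/d}\,V_{d-2}(r) = \frac{10}{M}V_{d-2}(r)$ (from the standard ratio of consecutive ball volumes and the definition $M = \frac{10}{r}\sqrt{d/2\pi}$). Summing the case-1 bounds over the at most $K-1$ relevant indices $j'$ gives $\sum_{j'} V(I_{j'}\cap B) \le (K-1)\cdot\frac{10}{MK}V_{d-2}(r) < \frac{10}{M}V_{d-2}(r) \le V(B)$, so $\bigcup_{j'\neq j}(I_{j'}\cap B) \subsetneq B$, and any $\vx_j$ in the uncovered remainder satisfies condition (2). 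I expect the main obstacle to be getting the geometric constants exactly right — in particular justifying the slab-thickness bound on the restricted hyperplane via Lemma~\ref{lemma:relation}, and making sure the $(d-2)$-volume bound on $I_{j'}\cap B$ is honest (the band's cross-section is contained in a $(d-2)$-ball of radius $\le r$, and its extent transverse to $\partial E_{j'}$ within $\partial E_j$ is at most the slab thickness). Everything else is bookkeeping with the definitions of $M$, $M_0$, $M_2$ and $q_j$.
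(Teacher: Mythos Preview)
Your proposal is correct and follows essentially the same argument as the paper: the same contradiction setup negating condition~(1), the same definition of $q_j$ and the bands $I_{j'}$, the same two-case split (large $\sin\theta_{jj'}$ giving a slab-thickness bound via Lemma~\ref{lemma:relation}, versus small $\sin\theta_{jj'}$ but large bias gap forcing $I_{j'}\cap B=\emptyset$ by Cauchy--Schwarz), and the same closing volume comparison $V_{d-1}(r)\ge \frac{10}{M}V_{d-2}(r)>(K-1)\cdot\frac{10}{MK}V_{d-2}(r)$. The constants and bookkeeping match as well.
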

\vspace{-0.6em}
\begin{proof}
Define $q_j = 5\epsilon / |c_j|$. For each $j' \neq j$, define $I_{j'} = \{\vx: |\vw_{j'}^\t\vx|\le q_j,\ \vx \in \partial E_j\}$. We prove by contradiction. Suppose for any $j'\neq j$,  $\sin\theta_{jj'} > KM\epsilon/|c_j|$ or $|b_{j'} - b_j| > M_2 \epsilon/|c_j|$. Otherwise the theorem already holds.

\textbf{Case 1. When $\sin\theta_{jj'} > KM\epsilon/|c_j|$ holds}. 

From Lemma~\ref{lemma:relation}, we know that for any $\vx \in \partial E_j$, if $\vw_{j'}^\t\vx = -q_j$, with $a_{j'} \le \frac{2q_j|c_j|}{MK\epsilon} = \frac{10}{MK}$, we have $\vx' = \vx + a_{j'}\vu_{j'}\in \partial E_j$ and $\vw_{j'}^\t\vx' = +q_j$. 

Consider a $d - 1$-dimensional sphere $B \subseteq \Omega_j$ and its intersection of $I_{j'} \cap B$ for $j'\neq j$. Suppose the sphere has radius $r$. For each $I_{j'} \cap B$, its $d-1$-dimensional volume is upper bounded by:
\begin{equation}
    V(I_{j'}\cap B) \le a_{j'} V_{d-2}(r) \le \frac{10}{MK}V_{d-2}(r)
\end{equation}
where $V_{d-2}(r)$ is the $d-2$-dimensional volume of a sphere of radius $r$. Intuitively, the intersection between $\vw_{j'}^\t \vx = -q_j$ and $B$ is at most a $d-2$-dimensional sphere of radius $r$, and the ``height'' is at most $a_{j'}$. 

\textbf{Case 2. When $\sin\theta_{jj'} \le KM\epsilon/|c_j|$ but $|b_{j'} - b_j| > M_2 \epsilon/|c_j|$ holds.}

In this case, we want to show that for any $\vx\in \Omega_j$, $|\vw_{j'}^\t \vx| > q_j$ and thus $I_{j'} \cap B = \emptyset$. If this is not the case, then there exists $\vx \in \Omega_j$ so that $|\vw_{j'}^\t \vx| \le q_j$. Then since $\vx \in \partial E_j$, we have:
\begin{equation}
\begin{aligned}
    |\vw_{j'}^\t \vx| &= |(\vw_{j'}-\vw_j)^\t\vx| = |(\tilde\vw_{j'}-\tilde\vw_j)^\t\tilde\vx + (b_j' - b_j)| \le q_j
\end{aligned}
\end{equation}
Therefore, from Cauchy inequality and triangle inequality, we have:
\begin{equation}
\begin{aligned}
\|\tilde\vw_{j'}-\tilde\vw_j\|\|\tilde\vx\| &\ge
|(\tilde\vw_{j'}-\tilde\vw_j)^\t\tilde\vx| \ge |b_j' - b_j| - |\vw_{j'}^\t \vx|
\end{aligned}
\end{equation}
From the condition, we have $\|\tilde\vw_{j'}-\tilde\vw_j\| = 2\sin\frac{\theta_{jj'}}{2} \le 2\sin\theta_{jj'} \le 2KM\epsilon/|c_j| $. Then
\begin{equation}
\begin{aligned}
2M_0MK\epsilon/|c_j| &\ge |(\tilde\vw_{j'} - \tilde\vw_j)^\t \tilde\vx| \ge |b_{j'} - b_j| - q_j > M_2 \epsilon/|c_j| - 5\epsilon/|c_j|
\end{aligned}
\end{equation}
which is equivalent to:
\begin{equation}
2M_0MK > M_2 - 5
\end{equation}
which means that 
\begin{equation}
M_2 < 2M_0MK + 5
\end{equation}
This is a contradiction. Therefore, $I_{j'} \cap B = \emptyset$ and thus $V(I_{j'} \cap B) = 0$.

\textbf{Volume argument.} Therefore, from the definition of $M$, we have $V(B) = V_{d-1}(r) \ge r\sqrt{\frac{2\pi}{d}} V_{d-2}(r) = \frac{10}{M}V_{d-2}(r)$, then we have:
\begin{equation}
    V(B) \ge \frac{10}{M}V_{d-2}(r) > (K-1) \cdot \frac{10}{MK} V_{d-2}(r)\ge \sum_{j'\neq j, j'\ \mathrm{in\ case\ 1}} V(I_{j'} \cap B)  
\end{equation}
This means that there exists $\vx_j \in B \subseteq \Omega_j$ so that $\vx_j \notin I_{j'} \cap B$ for any $j' \neq j$ and $j'$ in case $1$. That is, 
\begin{equation}
    |\vw_{j'}^\t\vx_j| > q_j
\end{equation}
On the other hand, for $j'$ in case $2$, the above condition holds for entire $\Omega_j$, and thus hold for the chosen $\vx_j$.
\end{proof}

\begin{figure*}
    \centering
    \includegraphics[width=\textwidth]{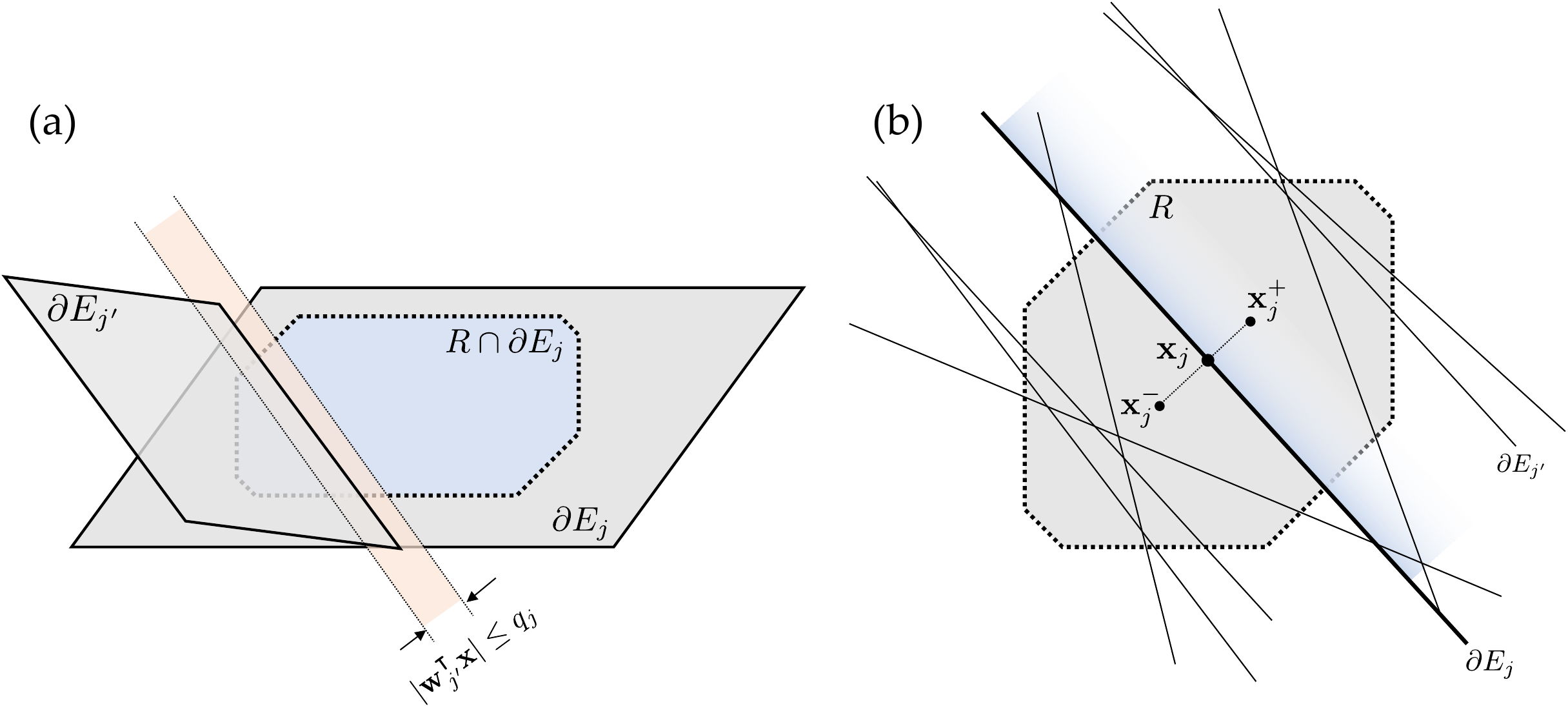}
    \caption{\textbf{(a)} Lemma~\ref{lemma:evidence-misalignment}. \textbf{(b)} Lemma~\ref{lemma:local-colinear-noisy-weight}.}
    \label{fig:lemma6}
\end{figure*}

\subsection{Lemma~\ref{lemma:local-colinear-noisy-weight}} 
\begin{lemma}[Local ReLU Independence, Noisy case]
\label{lemma:local-colinear-noisy-weight}
Let $R$ be an open set. Consider $K$ ReLU nodes $f_j(\vx) = \sigma(\vw_j^\t\vx)$, $j = 1, \ldots, K$. $\|\tilde\vw_j\| = 1$, $\vw_j$ are not co-linear. If there exists $c_1, \ldots, c_K, c_{\sbullet}$ and $\epsilon$ so that the following is true:
\begin{equation}
    \bigg|\sum_j c_j f_j(\vx) + c_{\sbullet} \vw^\t_{\sbullet}\vx \bigg| \le \epsilon, \quad \forall \vx \in R
\end{equation}
and for a node $j$, $\partial E_j \cap R \neq \emptyset$. Then there exists node $j'\neq j$ so that $\sin\theta_{jj'} \le MK\epsilon/|c_j|$ and 
$|b_{j'} - b_j| \le M_2\epsilon/|c_j|$, where $r, M, M_2$ are defined in Lemma~\ref{lemma:evidence-misalignment} but with $r' = r - 5\epsilon/|c_j|$.
\end{lemma}
\begin{proof}
Let $q_j = 5\epsilon/|c_j|$ and $\Omega_j = \{\vx : \vx \in \partial E_j \cap R,\ \   B(\vx, q_j) \subseteq R\}$. If situation (1) in Lemma~\ref{lemma:evidence-misalignment} happens then the theorem holds. Otherwise, applying Lemma~\ref{lemma:evidence-misalignment} with $R' = \{\vx: \vx \in R, \ \ B(\vx, q_j)\subseteq R\}$ and there exists $\vx_j \in \Omega_j$ so that  
\begin{equation}
    |\vw_{j'}^\t\vx_j| \ge q_j = 5\epsilon/|c_j| 
\end{equation}
Let two points $\vx^{\pm}_{j} = \vx_j \pm q_j \tilde\vw_j \in R$. In the following we show that the three points $\vx_j$ and $\vx^\pm_j$ are on the same side of $\partial E_{j'}$ for any $j'\neq j$. This can be achieved by checking whether $(\vw_{j'}^\t \vx_{j})(\vw_{j'}^\t \vx^\pm_{j}) \ge 0$ (Figure~\ref{fig:lemma6}):
\begin{small}{
\begin{eqnarray}
(\vw_{j'}^\t \vx_{j})(\vw_{j'}^\t \vx^\pm_{j}) &=& (\vw_{j'}^\t \vx_{j})\left[\vw_{j'}^\t (\vx_{j} \pm q_j \tilde\vw_j)\right] \\
&=& (\vw_{j'}^\t \vx_{j})^2 \pm q_j (\vw_{j'}^\t \vx_{j}) \vw_{j'}^\t\tilde\vw_j \\
&=& |\vw_{j'}^\t\vx_j| (|\vw_{j'}^\t\vx_j| \pm q_j  \vw_{j'}^\t\tilde\vw_j)
\end{eqnarray}}
\end{small}

Since $|\vw_{j'}^\t\tilde\vw_j| \le 1$, it is clear that $(\vw_{j'}^\t \vx_{j})(\vw_{j'}^\t \vx^\pm_{j}) \ge 0$. Therefore the three points $\vx_j$ and $\vx^\pm_j$ are on the same side of $\partial E_{j'}$ for any $j' \neq j$. 

Let $h(\vx) = \sum_j c_j f_j(\vx) + c_{\sbullet} \vw^\t_{\sbullet} \vx$, then $|h(\vx)| \le \epsilon$ for $\vx \in R$. Since $\vx^+_j + \vx^-_j = 2\vx_j$, we know that all terms related to $\vw_{\sbullet}$ and $\vw_{j'}$ with $j\neq j$ will cancel out (they are in the same side of the boundary $\partial E_{j'}$) and thus:
\begin{equation}
\begin{aligned}
    4\epsilon &\ge |h(\vx^+_j) + h(\vx^-_j) - 2h(\vx_j)| = |c_jq_j\vw^\t_j\vw_j| = |c_j|q_j = 5\epsilon 
\end{aligned}
\end{equation}
which is a contradiction. 
\end{proof}

\subsection{Theorem 2}
\label{sec-proof2}
\begin{proof}
Note that from Theorem 1, any input $\vx \in \cU\cap R$ can be written as $\tilde\vx = U\tilde\vy + \tilde\vx_0$, where $U\in \rr^{d\times d'}$ is a column-orthogonal matrix (i.e, $U^\t U = I_{d'\times d'}$). Also from Lemma~\ref{lemma:projected-vector}, any inner-product $\vw^\t\vx$ can be written as $\vw_y^\t\vy$, with $\vw_y := [\tilde\vw_y; \tilde\vw^\t\vx_0 + b]$ and $\tilde\vw_y := U^\t \tilde\vw$, and the inner product of two projected weights is:
\begin{equation}
\begin{aligned}
    \tilde\vp_j^\t \tilde\vp_k &:= \proj_{\cU}[\tilde\vw_j]^\t \proj_{\cU}[\tilde\vw_k] = \tilde\vw_j UU^\t UU^\t \tilde\vw_k = \tilde\vw_j U U^\t \tilde\vw_k = \tilde\vw_{y,j}^\t\tilde\vw_{y,k}
\end{aligned}
\end{equation}
Therefore, all the ReLU activations can be written in the reduced space, and the projected angle $\theta^{\cU}_{jk} := \arccos \tilde\vp_j^\t \tilde\vp_k$ we are aiming for is also defined in the reduced space $\vy$. Applying Lemma~\ref{lemma:local-colinear-noisy-weight} on the reduced space $\vy$ with $r = r(\cU\cap R \cap \partial E_j)$, and the conclusion follows.
\end{proof}

\subsection{Corollary 1}
\begin{proof}
By Theorem 2, we know that for a node $k_0$, if it is observed by another student node $k$, then there exists a node $j$ (can be either a teacher or another student node) so that their projected angle $\sin\theta^\cU_{jk_0}$ has the following upper bound: 
\begin{equation}
    \sin\theta^\cU_{jk_0} \le MK\epsilon / |\alpha_{kk_0}| 
\end{equation}
where $\alpha_{kk_0} := \vv_k^\t\vv_{k_0}$, $\vv_k \in \rr^C$ is the fan-out weights, and $C$ is the number of output for the two-layer network. On the other hand, by the condition, we have $\sin\theta^\cU_{jk_0} \ge c_0$ for any other teacher and student nodes, including $j$. Therefore, we have:
\begin{equation}
    c_0 \le \sin\theta^\cU_{jk_0} \le MK\epsilon / |\alpha_{kk_0}| 
\end{equation}
which leads to
\begin{equation}
    |\vv_k^\t\vv_{k_0}| = |\alpha_{kk_0}| \le MK\epsilon / c_0 
\end{equation}
If the student node $k_0$ is observed by $C$ independent observers $k_1,k_2,\ldots,k_C$, then we have:
\begin{equation}
    |\vv_{k_m}^\t\vv_{k_0}| = |\alpha_{k_0 k_m}| \le MK\epsilon / c_0,\quad m=1,\ldots,C 
\end{equation}
Let $Q := [\vv_{k_1},\vv_{k_2},\ldots,\vv_{k_C}] \in \rr^{C\times C}$, then we have $\|Q^\t\vv_{k_0}\|_\infty \le MK\epsilon / c_0$ and:
\begin{equation}
    \|\vv_{k_0}\|_\infty \le \|Q^{-\t}\|_\infty \|Q^\t\vv_{k_0}\|_\infty \le \|Q^{-1}\|_1 MK\epsilon / c_0 
\end{equation}
where $\|\cdot\|_1$ is the 1-norm of a matrix (or maximum absolute row sum). 
\end{proof}

\subsection{Theorem 3}
\begin{proof}
Note that according to Lemma 1 in~\citet{tian2019student} (Appendix B.1), for any teacher $f^{*m}$ and any student $f$ of the same depth, we have at layer $l = 1$:

\begin{small}{
\begin{eqnarray}
\vg_1(\vx) &=& D_1(\vx)V_1^\t(\vx)\left[V^{*m}_1(\vx)\vf^{*m}_1(\vx) - V_1(\vx)\vf_1(\vx)\right]  \\
&=& D_1(\vx)V_1^\t(\vx) (\vy^{*m}(\vx) - \vy(\vx))
\end{eqnarray}}
\end{small}

since for two-layer network, we have $\vy(\vx) = V_1(\vx)\vf_1(\vx)$ is the output. Therefore, if the gradient computed between teacher $f^*$ and student $f$ has $\|\vg_1\|_\infty \le \epsilon$, then

\begin{small}{
\begin{eqnarray}
   \|\vg^m_1\|_\infty &=& \|D_1V_1^\t(\vy^{*m}(\vx) - \vy(\vx))\|_\infty \\ 
   &\le&\|D_1V_1^\t(\vy^{*m}(\vx) - \vy^*(\vx))\|_\infty + \|D_1V_1^\t(\vy^*(\vx) - \vy(\vx))\|_\infty \\  
   &\le&\|D_1V_1^\t(\vy^{*m}(\vx) - \vy^*(\vx))\|_\infty + \|\vg_1\|_\infty\\  
   &\le& \|V_1\|_1 \epsilon_0 + \epsilon 
\end{eqnarray}}
\end{small}

where $\|V_1\|_1 = \max_j \|\vv_j\|_1$ is the 1-norm (or the maximum absolute row sum) of matrix $V_1$. Then we apply Theorem 2 between the student $f$ and teacher $f^{*m}$ and the conclusion follows.
\end{proof}

\subsection{Unidentifiable teachers and Student Bias} 
\vspace{-0.1in}
We might wonder what would happen if there exist two teachers $f^{*1} \neq f^{*2}$ so that $\vy_i = f^{*1}(\vx_i) + \xi^1_i = f^{*2}(\vx_i) + \xi^2_i$ with different bias: $\|\xi^1_i\| \le \epsilon_0$ and $\|\xi^2_i\|\le \epsilon_0$. In this case, which teacher the student would converge into? We could use the same framework to analyze it: 
\begin{theorem}
\label{thm:multiple-teachers}
For any two-layered network $f^{*l}$ of the same architecture as $f^*$ and $\|f^*(\vx) - f^{*l}(\vx)\| \le \epsilon_0$ for all $\vx \in R$, when $\|\vg_1\|_\infty\le \epsilon$, for a teacher node $j$ in $f^{*l}$ observed by a student $k$, there exists a student $k'$ so that $\sin\theta^\cU_{jk'} \le MK(\epsilon + \epsilon_0 \max_j \|\vv_j\|_1) / \alpha^l_{jk}$.
\end{theorem}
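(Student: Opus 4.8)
The plan is to treat the statement as a corollary of Theorem~\ref{thm:proj-specialization-noise}: since the student $f$ has (almost) converged against the true teacher $f^*$, and $f^{*l}$ is uniformly $\epsilon_0$-close to $f^*$ on $R$, the student should \emph{also} have a small backpropagated gradient when measured against $f^{*l}$, just with a slightly worse constant $\epsilon' = \epsilon + \epsilon_0\max_k\|\vv_k\|_1$; feeding $\epsilon'$ into Theorem~\ref{thm:proj-specialization-noise} (with $f^{*l}$ playing the role of $f^*$) then yields the claim.

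First I would write down the hidden-layer gradient computed between the alternative teacher $f^{*l}$ and the student $f$ via the recursive gradient rule (Lemma 1 in~\citet{tian2019student}): for a two-layer network, $\vg^l_1(\vx) = D_1(\vx)V_1^\t(\vx)\big(\vy^{*l}(\vx)-\vy(\vx)\big)$, where $D_1$ and $V_1$ are built only from the student's weights (hence identical to those appearing when the comparison teacher is $f^*$), and $\vy(\vx)=V_1(\vx)\vf_1(\vx)$ is the student's output. Next I would split $\vy^{*l}-\vy = (\vy^{*l}-\vy^*)+(\vy^*-\vy)$ and apply the triangle inequality, so that $\|\vg^l_1\|_\infty \le \|D_1V_1^\t(\vy^{*l}-\vy^*)\|_\infty + \|\vg_1\|_\infty$; the last term is $\le\epsilon$ by hypothesis. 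For the first term I would note that $D_1$ is a $0/1$ diagonal matrix (so it does not increase the $\infty$-norm) and the $k$-th entry of $V_1^\t\vz$ equals $\vv_k^\t\vz$, whence $\|D_1V_1^\t\vz\|_\infty \le \max_k|\vv_k^\t\vz| \le (\max_k\|\vv_k\|_1)\,\|\vz\|_\infty \le (\max_k\|\vv_k\|_1)\,\|\vz\|_2$; taking $\vz = \vy^{*l}(\vx)-\vy^*(\vx) = f^{*l}(\vx)-f^*(\vx)$ and using $\|f^{*l}(\vx)-f^*(\vx)\|\le\epsilon_0$ for every $\vx\in R$ gives $\|\vg^l_1\|_\infty \le \epsilon + \epsilon_0\max_k\|\vv_k\|_1$ throughout $R$.

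Finally I would invoke Theorem~\ref{thm:proj-specialization-noise} with teacher $f^{*l}$ and the relaxed convergence parameter $\epsilon' := \epsilon + \epsilon_0\max_k\|\vv_k\|_1$: for each node $j$ of $f^{*l}$ observed by a student node $k$, there is a student node $k'$ with $\sin\theta^\cU_{jk'} \le M K\epsilon'/\alpha^l_{jk}$, where $\alpha^l_{jk} := \vv^{*l\t}_j\vv_k$ and $M$ is the constant supplied by Theorem~\ref{thm:proj-specialization-noise}; expanding $\epsilon'$ gives exactly the stated bound. There is no genuinely hard step here --- the argument is a short reduction --- and the only places needing care are bookkeeping points: checking the orientation of $V_1$ so that the matrix $1$-norm is really $\|V_1\|_1 = \max_k\|\vv_k\|_1$, converting between $\|\cdot\|_2$ and $\|\cdot\|_\infty$ on the $C$-dimensional output, and making sure the hypothesis $\|f^{*l}-f^*\|\le\epsilon_0$ is used \emph{pointwise over the whole infinite region $R$}, since Theorem~\ref{thm:proj-specialization-noise} requires $\|\vg^l_1\|_\infty\le\epsilon'$ everywhere on $R$.
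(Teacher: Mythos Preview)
Your proposal is correct and matches the paper's own proof essentially step for step: write $\vg^l_1 = D_1V_1^\t(\vy^{*l}-\vy)$, split $\vy^{*l}-\vy$ through $\vy^*$, bound the two pieces by $\|V_1\|_1\epsilon_0$ and $\epsilon$ respectively, and then invoke Theorem~\ref{thm:proj-specialization-noise} with the relaxed parameter $\epsilon' = \epsilon + \epsilon_0\max_j\|\vv_j\|_1$. Your treatment is in fact slightly more explicit than the paper's about the norm bookkeeping (the $\|\cdot\|_\infty\!\to\!\|\cdot\|_1$ matrix-norm identity and the $\|\cdot\|_\infty\le\|\cdot\|_2$ conversion on the output), but the argument is the same.
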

Note that this theorem can be applied to any teacher $f^{*l}$ to yield a separate bound for the alignment. Some bounds are strong while others are loose. The larger $\alpha^l_{jk}$, the tighter the bound. Therefore, there are two phases in the training: \textbf{(1)} at the early stage of training, $\epsilon$ is fairly large, the norm of the fan-out weights $\|\vv_j\|_1$ is small, and many candidate teachers (as well as their hidden nodes) with reasonable $\epsilon_0$ can stand out as long as their $\alpha^l_{jk}$ is large. Therefore, the student moves to salient (large $\alpha^l_{jk}$) but potentially biased (large $\epsilon_0$) explanation. \textbf{(2)} When the training converges and $\epsilon$ is small, some $\|\vv_j\|_1$ becomes large, the ``real'' teacher with small bias $\epsilon_0$ gives the tightest bound, and the student converges to it.

The case (1) is interesting since it shows that the student node doesn't go straight to the ground truth teacher node from the beginning, but has a bias towards simple models that could roughly explain data (with reasonable $\epsilon_0$). This is a fixed bias for student nodes that only dependent on the dataset and regardless of the model initialization. This could be used to explain the adversarial transferability~\citep{goodfellow2014explaining}. In this paper, we focus on the specialization of student nodes on a specific teacher network and leave the case of ``one student multiple teachers'' (i.e., Theorem~\ref{thm:multiple-teachers}) for future empirical study.

\subsection{Ablation study on specialization distribution among teacher nodes}
To investigate how well one teacher node could be specialized by student nodes and the existence of special teacher nodes which are easy to be specialized by student nodes, we conduct the ablation study by training three student networks with different random initialization and check the number of student nodes specialized to each teacher node as shown in Figure~\ref{fig:histogram}. We found that teacher nodes are specialized almost uniformly by different student nodes, showing that there may not be special ``robust" teacher nodes, which could be an interesting finding.

\begin{figure*}
    \centering
    \includegraphics[width=\textwidth]{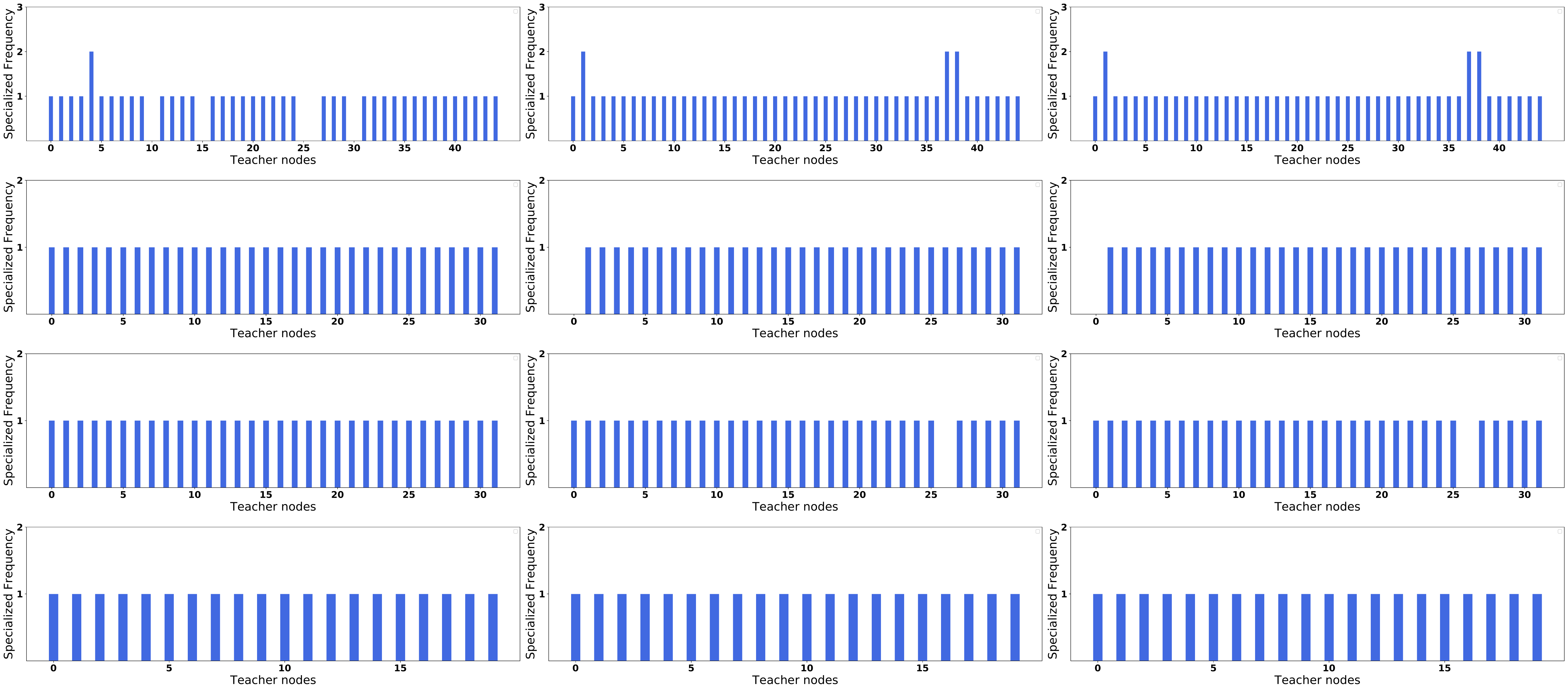}
    \caption{Specialized frequency of each teacher node by different student networks among different layers (We consider the node to be specialized if the NC is larger than 0.9). Figures in different columns refer to specialized evaluation with student network trained from different random initialization, while in different rows refer to evaluation on different layers. Here we can see 1) Teacher nodes are specialized uniformly by student nodes. 2) Different random initialization will lead to similar observations. }
    \label{fig:histogram}
\end{figure*}

\subsection{Analysis on Robust feature dataset}
\label{sec-robustfeature}

Robust feature disentanglement, proposed by \citet{ilyas2019adversarial}, is a general method to generate a robust feature dataset from a robustly trained model. Specifically, the robust feature dataset $\mathcal{D}=\{x_r\}$ is generated by minimizing the feature representation distance as below:
\begin{equation}
    x_r = \arg \min\nolimits_{x_r} ||f_M(x) - f_M(x_r)||_2
\end{equation} 
while $f_M$ represents the representation output of model $f$ and $x$ is drawn from the raw dataset. 
For every $x$ as the target image, the robust feature image $x_r$ is optimized from a randomly selected image or random noise.

In \tsframework setting, we define $f$ to be a robust student model if its prediction can be consistent with the teacher's prediction against oracle-adversarial or data-adversarial. Different from the standard setting, the generated $x_r$ may lie in different categories with $x$ from the teacher's perspective.
In order to avoid the inconsistency, we add another term into robust feature generation's goal to minimize the logit difference between robust feature image $x_r$ and target image $x$ given by the teacher model:
\begin{equation}
    x_r = \arg\min\nolimits_{x_r}  \alpha||L_t(x_r) - L_t(x)||_2 + ||f_M(x) - f_M(x_r)||_2
\end{equation}

where $\alpha$ is the balancing hyperparameter. We choose $\alpha = 0.5$ for the default setting.

We choose the AT model trained with $150$ epochs and generate the corresponding robust feature dataset $\mathcal{D}$. Based on $\mathcal{D}$, we train the robust feature model for $150$ epochs via fine-tuning on top of a $150$ epochs trained ST model.
In order to make a fair comparison, we compare the $150$ epochs Robust Feature Training (RFT) model to $150, 300$ epochs trained ST models and $150$ epochs AT model. All models are trained with the teacher's logit feedback.

\begin{table}[!htbp]
\centering
\caption{\small Robustness of student models trained with Robust Feature Training (RFT), Standard Training (ST), and Adversarial Training (AT) for different epochs.}
\begin{tabular}{l|l|l|l|l}
\toprule
          Model      & AT (150 epochs)        & ST ($150$ epochs) & ST ($300$ epochs) & RFT ($150$ epochs) \\ \hline
Robust Accuracy & \multicolumn{1}{c|}{83.27\%} & \multicolumn{1}{c|}{35.88\%}  & \multicolumn{1}{c|}{61.73\%}  & \multicolumn{1}{c}{45.39\%}         \\ \bottomrule
\end{tabular}

\label{tab:robustfeaturera}
\end{table}

\begin{figure}
\centering
\begin{minipage}{.47\textwidth}
\centering
    \includegraphics[width=1.0\linewidth]{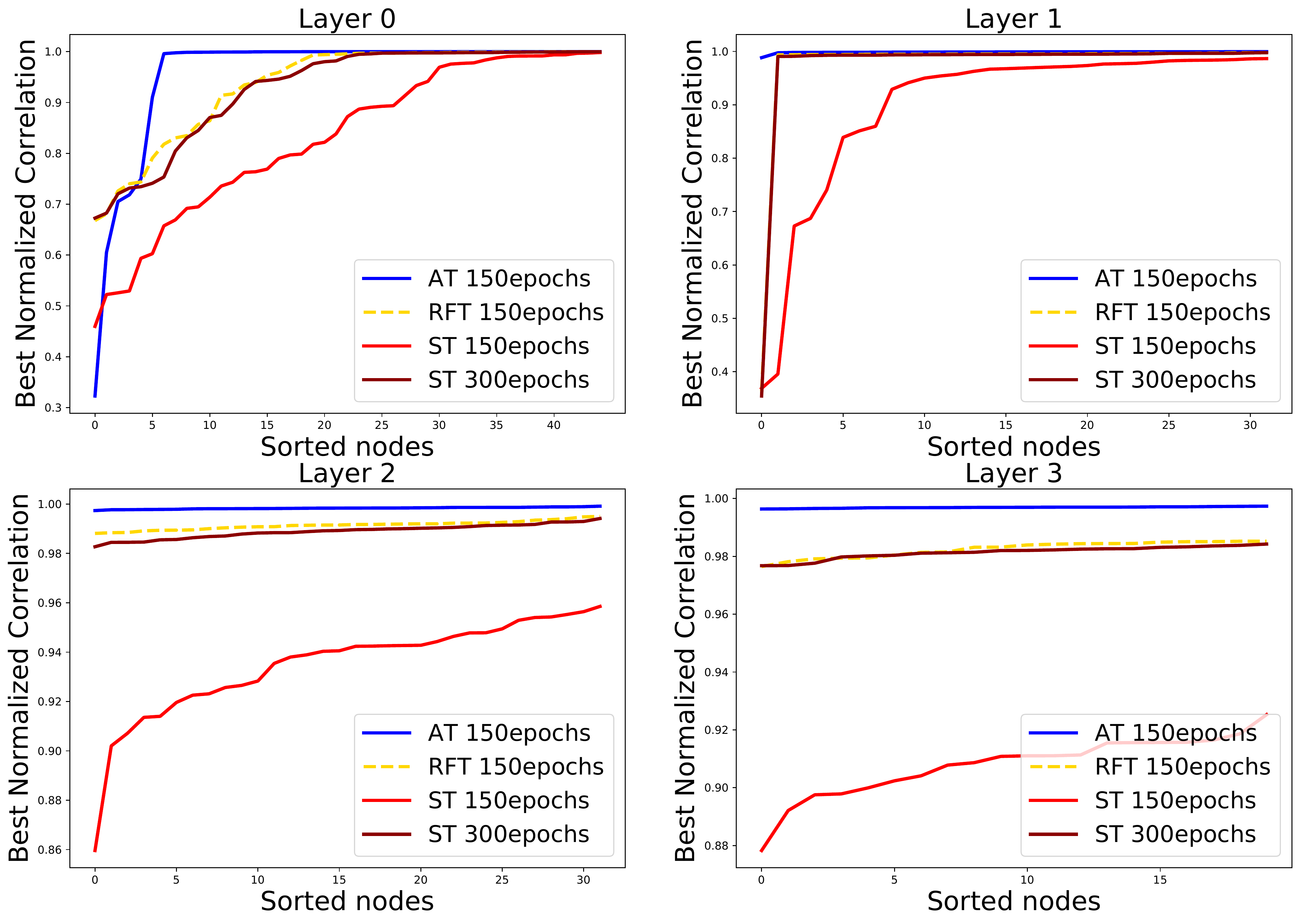}
    \caption{\small Sorted BNC curve of student models trained with Robust Feature Training (RFT), Standard Training (ST), and Adversarial Training (AT) trained for different epochs.}
    \label{fig:robustfeature}
\end{minipage}
\hspace{\fill}
\begin{minipage}{.47\textwidth}
\centering
    \includegraphics[width=1.0\linewidth]{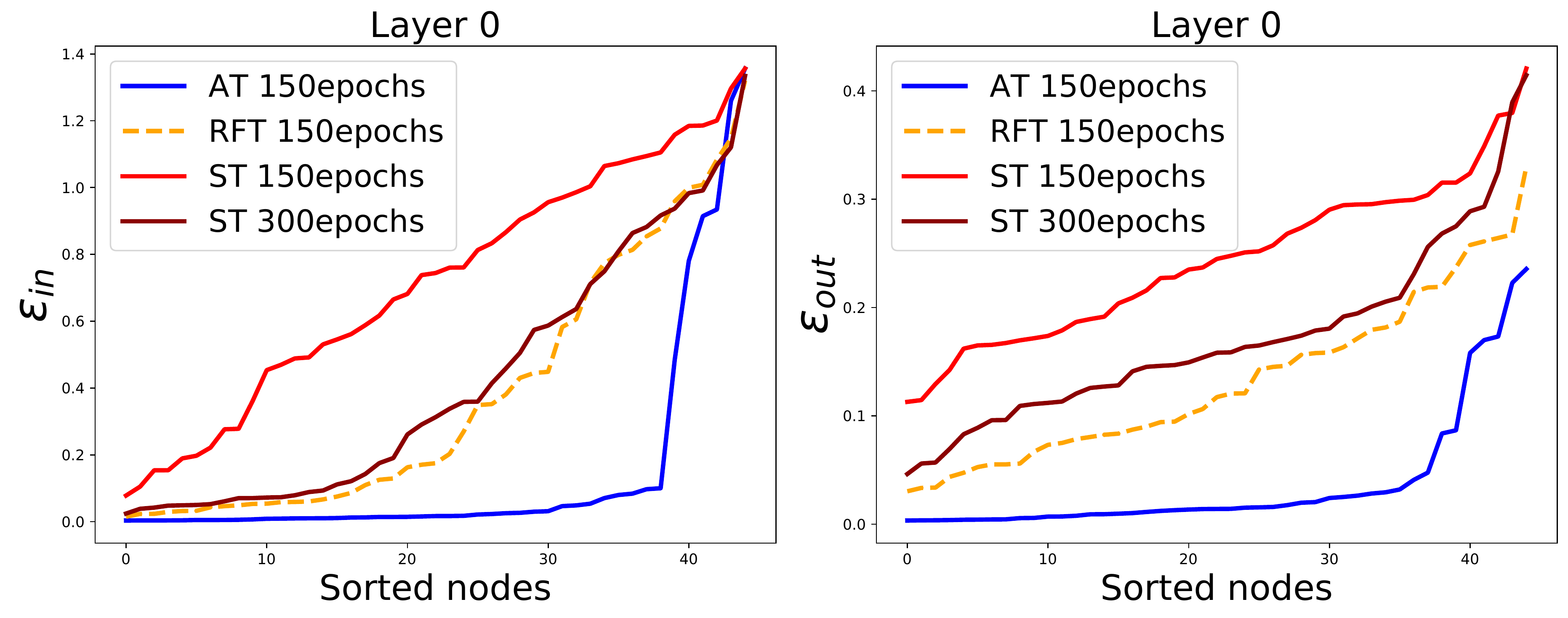}
    \caption{\small $(\epsilon_{\text{in}},\epsilon_{\text{out}})$ curve of student models trained with Robust Feature Training (RFT), Standard Training (ST), and Adversarial Training (AT) trained for different epochs.}
    \label{fig:robustfeatureeps}
\end{minipage}
\end{figure}

Table~\ref{tab:robustfeaturera} and Figure~\ref{fig:robustfeature}
show the robustness and neuron specialization of student models with RFT, ST, and AT. We can see \textbf{1)} AT model achieves the best robustness as well as the best neuron specialization; \textbf{2)} RFT model ($150$ epochs) fine-tuned from ST model ($150$ epochs) achieves better model robustness and specialization than ST model ($150$ epochs); \textbf{3)} The neuron specialization of RFT model ($150$ epochs) and ST model ($300$ epochs) is close but ST model ($150$ epochs) achieves better robustness. Figure~\ref{fig:robustfeatureeps} shows the $\epsilon_{\text{in}}, \epsilon_{\text{out}}$ curve and we can see the $150$ epochs RFT model shows the similar $\epsilon_{\text{in}}$ curve but slightly better $\epsilon_{\text{out}}$ curve to $300$ epochs ST model. We analyze this phenomenon by considering the robust feature dataset mainly captures the out-plane vulnerability. As we discussed in Section 5.3, the in-plane vulnerability could be more severe to the model's robustness and that could be the reason why the $150$ epochs RFT model achieves slightly worse robustness than the $300$ epochs ST model.

\noindent\textit{Remarks.}
Based on the comparison between RFT, AT, ST models, we can conclude \textit{again} that the neuron specialization of student models highly indicates their robustness. 
On the other hand, when the neuron specialization is close, the robustness comparison between them is less informative since other factors such as data distribution may have an impact on it.
In addition, the \tsframework provides an in-depth explanation of why the robust feature dataset exists from the neuron specialization perspective. The robust feature dataset can help model capture the in-plane data projection and out-plane vulnerability therefore improve the correlation between student and teacher, which leads to better model robustness. 
\end{document}